\def\eqref#1{equation~\ref{#1}}
\def\1{\bm{1}}
\DeclareMathAlphabet{\mathsfit}{\encodingdefault}{\sfdefault}{m}{sl}
\SetMathAlphabet{\mathsfit}{bold}{\encodingdefault}{\sfdefault}{bx}{n}
\newcommand{\E}{\mathbb{E}}
\newcommand{\R}{\mathbb{R}}
\title{Why is SAM Robust to Label Noise?}
\author{%
  Christina Baek \quad Zico Kolter \quad Aditi Raghunathan \\
  Carnegie Mellon University\\
   $\{\texttt{kbaek, zkolter, raditi}\}$ \texttt{@andrew.cmu.edu} \\
}
\begin{document}

\maketitle
\begin{abstract}
Sharpness-Aware Minimization (SAM) is most known for achieving state-of the-art performances on natural image and language tasks. However, its most pronounced improvements (of tens of percent) is rather in the presence of label noise. Understanding SAM's label noise robustness requires a departure from characterizing the robustness of minimas lying in ``flatter'' regions of the loss landscape. In particular, the peak performance under label noise occurs with early stopping, far before the loss converges. We decompose SAM's robustness into two effects: one induced by changes to the logit term and the other induced by changes to the network Jacobian. The first can be observed in linear logistic regression where SAM provably up-weights the gradient contribution from clean examples. Although this explicit up-weighting is also observable in neural networks, when we intervene and modify SAM to remove this effect, surprisingly, we see no visible degradation in performance. We infer that SAM's effect in deeper networks is instead explained entirely by the effect SAM has on the network Jacobian. We theoretically derive the  implicit regularization induced by this Jacobian effect in two layer linear networks. Motivated by our analysis, we see that cheaper alternatives to SAM that explicitly induce these regularization effects largely recover the benefits in deep networks trained on real-world datasets.

\end{abstract}

\section{Introduction}
In recent years, there has been growing excitement about improving the generalization of deep networks by regularizing the sharpness of the loss landscape. Among optimizers that explicitly minimize sharpness, Sharpness Aware Minimization (SAM)~\citep{foret2020sharpness} garnered popularity for achieving state-of-the-art performance on various natural image and language benchmarks. Compared to stochastic gradient descent (SGD), SAM provides consistent improvements of several percentage points. Interestingly, a less widely known finding from~\citet{foret2020sharpness} is that SAM's most prominent gains lie elsewhere, in the presence of random label noise. In fact, SAM is more robust to label noise than SGD by tens of percentage points, rivaling the current best label noise techniques~\citep{jiang1712mentornet, zhang2017mixup, arazo2019unsupervised}. In Figure~\ref{fig:train_stats}, we demonstrate this finding in CIFAR10 with 30$\%$ label noise, where SAM's best test accuracy is $17\%$ higher. In particular, we find that the robustness gains are most prominent in a particular version of SAM called 1-SAM which applies the perturbation step to each sample in the minibatch separately.

An important caveat about the random label noise setting is that the test accuracy does not improve with further training, but instead peaks in the middle. Consequently, we argue understanding SAM in this regime requires a departure from reducing SAM to the
sharpness of its solution at convergence, but instead reasoning about SAM's ``early learning'' behavior. In fact, even in settings with a unique minimum, the best test accuracy may change based on the optimization trajectory. Indeed, the performance achieved by SAM does not diminish with underparametrization, with the gains above SGD sometimes increasing with more data (Appendix~\ref{app:ablation}). 

In this work, we investigate why 1-SAM is more robust to label noise than SGD at a more mechanistic level. Decomposing the gradient of each example (``sample-wise'' gradient) by chain rule into $\nabla_w \ell(f(w, x), y) = \partial \ell/\partial f \cdot \nabla_w f $, we analyze the effect of SAM's perturbation on the terms $\partial \ell/\partial f$ (``logit scale'') and $\nabla_w f$ (``network Jacobian''), separately. We make the following key conclusions about how these components improve early-stopping test accuracy. 
\begin{figure}[t]
    \centering
    \includegraphics[width=\textwidth]{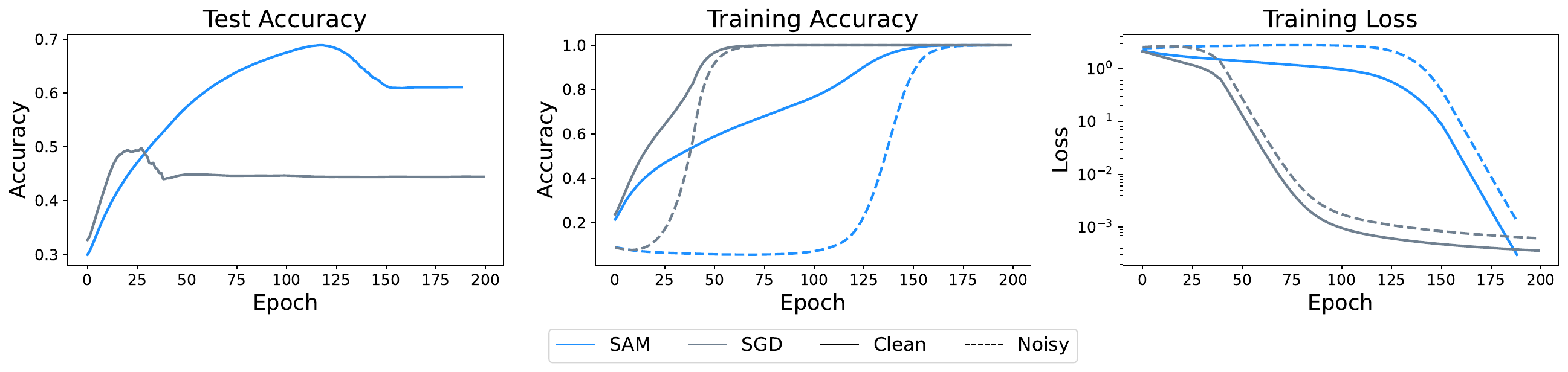}
    \caption{ CIFAR10 training accuracy and loss in clean versus noisy data. SAM achives a higher clean training accuracy before fitting the noisy data, i.e., when accuracy of noisy training data surpasses random chance. This corresponds with a higher peak in test accuracy.}
    \label{fig:train_stats}
\end{figure} 
We begin our study in linear models, where the only difference between SAM and SGD is the logit scale term. Here, we show that SAM reduces to a reweighting scheme that \emph{explicitly} up-weights the gradient contribution of low loss points. This effect is particularly useful in the presence of mislabeled examples. When training with gradient descent, correctly labeled or \emph{clean} examples initially dominate the direction of the update and as a result, their corresponding loss decreases first before that of mislabeled or \emph{noisy} examples~\citep{liu2020early, liu2023emergence}. Similar to many existing label-noise techniques~\citep{liu2020early}, SAM's explicit up-weighting keeps the gradient contribution of clean examples large even after they are fit, slowing down the rate at which noisy examples are learned in comparison. We hypothesize that higher peak test accuracy corresponds with achieving higher clean training before overfitting to noise. Empirically, the gap between the training accuracy of clean and noisy examples closely tracks the test accuracy in earlier epochs (Figure~\ref{fig:acc_ratio}).

\begin{wrapfigure}{r}{0.38\textwidth}
\centering
    \includegraphics[width=0.37\textwidth]{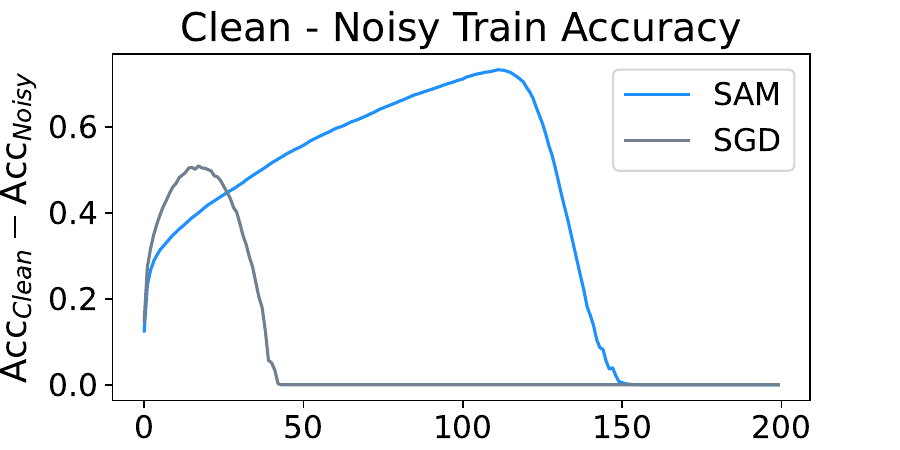}
    \caption{SAM learns clean examples faster than noisy examples.}
\label{fig:acc_ratio}
\end{wrapfigure}

In deep networks, SAM's logit scale term empirically induces a similar effect of explicitly up-weighting the gradients of low loss examples. However, for the magnitude of SAM's perturbation utilized in practice, we find that SAM's logit scale has negligible impact on SAM's robustness. On the other hand, just keeping SAM's perturbation on the network Jacobian term \emph{retains nearly identical performance} to SAM. This suggests that there is a fundamentally different mechanism that originates in SAM's Jacobian term that results in most of SAM's label noise robustness in the nonlinear case.

Motivated by this finding, we analyze the Jacobian-only SAM for a 2-layer linear network, and show that the resulting update decomposes into SGD with $\ell_2$ regularization on the final layer weights and intermediate activations. We argue that the benefits of such regularization could partially be reasoned about in terms of gradient contribution or as an \emph{implicit} up-weighting scheme. Namely, it keeps the loss of correctly fit points high by constraining the magnitude of the network output. Furthermore, including just this regularization for deep networks, while not achieving the full benefits of SAM, nonetheless substantially improves the performance of SGD. We emphasize that these methods are meant to be illustrative, and not intended to be a standalone method for label noise robustness. But in this vein, our findings suggest that SAM's label noise robustness may not come via sharpness properties at convergence, but rather from the optimization trajectory taken.   

\section{Preliminaries}
\subsection{Problem Setup}
\paragraph{Model} We consider binary classification. The sign of the model's output $f:\R^d \rightarrow \R$ maps inputs $x \in \mathcal{X}$ to discrete labels $t \in \{-1,1\}$. We will study two kinds of models -- a linear model and a 2-layer deep linear network (DLN) in this work. We do not include the bias term. 
\begin{equation}
\label{eq:models}
\begin{split}
    &\textbf{Linear: } f(w, x) = \langle w, x \rangle \\ 
    &\textbf{DLN: } f(v, W, x) = \langle v, W x\rangle \ \text{where } W \in \R^{d \times h}, v \in \R^{h},
    \end{split}
\end{equation}
Let us denote the intermediate activation as $z = Wx \in \R^h$. We will abuse the notation to also refer to a generic parameterized model as $f(w, x)$ when clear from context. 

\paragraph{Objective} 
We consider the logistic loss $\ell(w, x, t) = - \log(\sigma(t \cdot f(w, x) ))$ for sigmoid function $\sigma(z) = \frac{1}{1 + \exp(-z)}$. Given $n$ training points $[(x_i, t_i)]_{i=1}^n$ sampled from the data distribution $\mathcal{D}$, our training objective is
{
\begin{align}
    \min_w L(w) \ \text{where} \ L(w) = \frac{1}{n} \sum_{i=1}^n \ell(w, x_i, t_i)
    \label{eq:loss_objective}
\end{align}}

By chain rule, we can write the sample-wise gradient with respect to the logistic loss $\ell(w, x, t)$ as 
\begin{align}
    - \nabla_w \ell(x,t) = t \cdot \underbrace{\sigma(-t f(w, x))}_{\text{logit scale}} \underbrace{\nabla_w f(w, x)}_{\text{Jacobian}}
\end{align}

The logit scale, which is the model's confidence that $x$ belongs in the incorrect class $-t$, \emph{scales} the Jacobian term. The logit scale grows monotonically with the loss. We refer to the network Jacobian $\nabla_w f(w, x)$ as the ``Jacobian term''.

Although our mathematical analysis is for binary classification, the cross-entropy loss for multiclass classification observes a similar decomposition for its sample-wise gradient:
{
\begin{align}
    -\nabla_w \ell(x,y) = \langle \underbrace{e_t - \sigma(f(w,x))}_{\text{logit scale}}, \underbrace{\nabla_w f(w, x)}_{\text{Jacobian}} \rangle
\end{align}}
where $\sigma(\cdot)$ is the softmax function and $e_t$ is the one-hot encoding of the target label. Empirically, we will observe that the conclusions from our binary analysis transfer to multi-class.

\subsection{Sharpness Aware Minimization}
Sharpness-aware Minimization (SAM) 
\label{sec:preliminaries:sub:SAM} \citep{foret2020sharpness} attempts to find a flat minimum of the training objective (Eq. \ref{eq:loss_objective}) by minimizing the following objective
\begin{align}
     \min_w \max_{\| \epsilon \|_2 \leq \rho} L(w + \epsilon),
\end{align}
where $\rho$ is the magnitude of the adversarial weight perturbation $\epsilon$. The objective tries to find a solution that lies in a region where the loss does not fluctuate dramatically with any $\epsilon$-perturbation. SAM uses first-order Taylor approximation of the loss to approximate worst-case $\epsilon$ with the normalized gradient $\rho \nabla_w L(w) / \|\nabla_w L(w)\|$. 

\paragraph{1-SAM} However, the naive SAM update that computes a single $\epsilon$ does not observe performance gains over SGD in practice unless paired with a small batch size \citep{foret2020sharpness, andriushchenko2022towards}. 
Alternatively \citet{foret2020sharpness} propose sharding the minibatch and calculating SAM's adversarial perturbation $\epsilon$ for each shard separately. At the end of this extreme is 1-SAM which computes $\epsilon$ for the loss of each example in the minibatch individually. Formally, this can be written as
\begin{align}
\label{1-SAM Update}
    w = w - \eta \left(\frac{1}{n} \sum_{i=1}^n \nabla_{w + \epsilon_i} \ell \left(w + \epsilon_i, x_i, t_i\right)\right) \ \text{where } \epsilon_i = \rho \frac{\nabla_w \ell(x_i,t_i)}{\| \nabla_w \ell( x_i,t_i)\|_2}
\end{align}
In practice, 1-SAM is the version of SAM that achieves the most performance gain. In this paper, we focus on understanding 1-SAM and will use SAM to refer to 1-SAM unless explicitly stated otherwise. 

\subsection{Hybrid 1-SAM}
In our study of 1-SAM, we try to isolate the robustness effects of 1-SAM coming from how the perturbation $\epsilon_i$ affects the logit scaling term and the Jacobian term of each sample-wise update. To do so, we will also pay attention to the following variants of 1-SAM:
{
\begin{align}
    \text{1-SAM}: \ -\nabla_{w+\epsilon_i} \ell(w + \epsilon_i, x_i, t_i)= &t \sigma(-t f(\text{\colorbox{blue!15}{$w + \epsilon_i$}},x)) \nabla_{w + \epsilon_i} f(\text{\colorbox{blue!15}{$w + \epsilon_i$}}, x))\\
    \text{Logit SAM}: \Delta^{\text{L-SAM}} \ell(w, x_i, t_i)= &t \sigma(-t f(\text{\colorbox{blue!15}{$w + \epsilon_i$}},x)) \nabla_w f(\text{\colorbox{orange!30}{$w$}}, x_i)\\ 
    \text{Jacobian SAM}: \Delta^{\text{J-SAM}} \ell(w, x_i, t_i)=  &t \sigma(-t f(\text{\colorbox{orange!30}{$w$}},x_i)) \nabla_{w + \epsilon_i} f(\text{\colorbox{blue!15}{$w + \epsilon_i$}}, x_i)
\end{align}}

Logit SAM (L-SAM) only applies the SAM perturbation to the logit scale for each sample-wise gradient while Jacobian SAM (J-SAM) only applies the SAM perturbation to the Jacobian term. We will observe that in deep networks, J-SAM observes close to the same performance as 1-SAM while L-SAM does not. 

\section{Linear models: SAM up-weights the gradients of low loss examples}
\label{sec:linear}

We first study the robustness affects of the logit scale term in linear models. Not only can SAM's boost in early-stopping performance 
be observed in linear models, but the Jacobian term $\nabla_w f(w, x) = x$ for linear models and is independent of the weights. Thus, any robustness obtained by SAM in this setting is specifically due to SAM's logit scale.

\begin{figure}[t]
    \centering
    \begin{subfigure}[t]
    {0.49\textwidth}
    \label{fig:linear_perf-(a)1SAM_test_acc}
    \centering
    \includegraphics[width =0.85\textwidth]
    {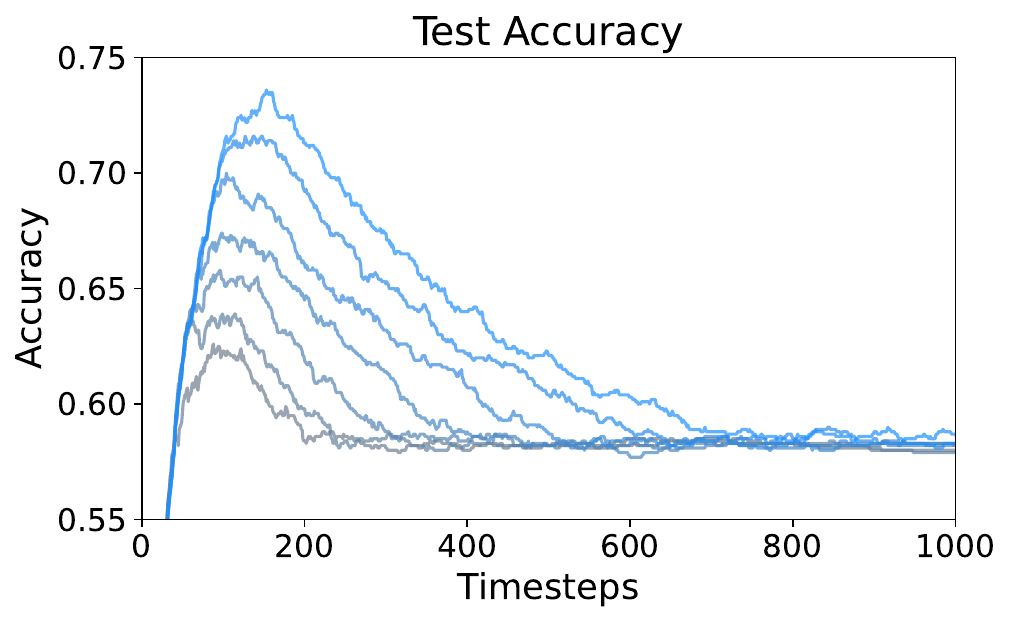}
    \caption{SAM test accuracy for $\rho \in [0, 0.18]$. Bluer curves denote larger $\rho$. Accuracy improves with larger $\rho$.}
    \end{subfigure}\hfill
    \begin{subfigure}[t]{0.47\textwidth}
    \label{fig:linear_perf-(b)Loss}
    \centering
    \includegraphics[width = 0.85\textwidth]{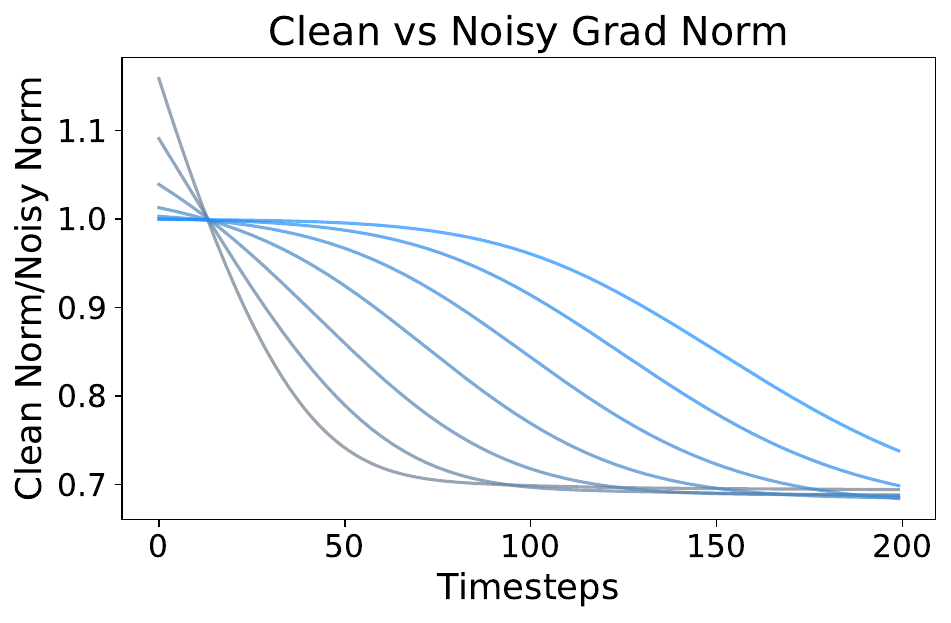}
    \caption{Ratio of the average sample-wise gradient norm of clean versus noisy points  \label{fig:linear_perf(d)}}
    \end{subfigure}
    \caption{Linear models trained on the toy Gaussian data using SAM. SAM's preferential up-weighting of low loss points (right) corresponds with higher early stopping test accuracy (left).}
    \label{fig:linear_perf}
\end{figure}
\subsection{Experimental investigation into SAM effect in linear models}
We train our linear models over the following toy Gaussian data distribution $\mathcal{D}: \R^d \times \{-1, 1\}$. 
\begin{equation}
\label{eq:toy_data}
\begin{split}
    \text{True Label: } y \sim \{-1, 1\} \text{ by flipping fair coin } \\
        \text{Input: } x \sim y \cdot \left[B \in \mathbb{R}, z \sim \mathcal{N}\left(0, \frac{\gamma^2}{d-1} I_{d-1 \times d-1}\right)\right] \\
    \text{Target: } t = y \cdot \varepsilon \text{ where }
    \varepsilon \sim \{-1 \ \ \text{w.p} \ \Delta, \ 1 \ \ \text{w.p} \ (1 - \Delta)\}
\end{split}
\end{equation}
In $x$, the first dimension contains the true signal $y B$ while the remaining $d-1$ dimensions is uncorrelated Gaussian noise.  We sample training datapoints $(x, t)$ from this distribution where the random label noise $\varepsilon$ corrupts $\Delta$ of the training data. We expect $\Delta < 0.5$ meaning the \emph{majority} of the data is still clean. In the test data, we assume there are no mislabeled examples ($t = y$). 

In Figure \ref{fig:linear_perf}, we compare the performance of full-batch gradient descent and SAM on our toy Gaussian data with $40\%$ label noise (See Appendix \ref{app:methods} for experimental details). Even in this simple setting SAM observes noticeably higher early stopping test accuracy over SGD. We run a grid search for $\rho$ between $0$ and $0.18$ and observe that the early stopping test accuracy monotonically increases with $\rho$. Correlated with this improved performance, when we plot the average sample-wise gradient norm of clean examples versus noisy examples along training, we observe that this ratio decays slower with larger $\rho$. This leads us to suspect that SAM's superior test performance is due to clean examples dominating the gradient update for longer. Previously, \cite{liu2020early} proved that at the beginning of gradient descent training in linear models, the clean majority dominates the gradient, causing the clean examples to be fit first. However, the dynamic quickly changes as training progresses. As the loss of clean points quickly decays, the contribution of noisy outliers to the gradient update begins to outweigh that of the clean examples. This results in the test accuracy dropping back down. 

In the next section, we will prove that SAM, by virtue of its adversarial weight perturbation, \emph{preferentially} up-weights the gradient signal from low-loss points. This keeps the gradient contribution from clean points high in the early training epochs. Consequently, SAM achieves higher test accuracy by prioritizing fitting more clean training data before overfitting to noise. 

\subsection{Analysis of SAM's logit scale} Recall that 1-SAM update of  datapoint $(x_i, t_i)$ is the gradient evaluated at the weights perturbed by the normalized sample-wise gradient. In linear models, the perturbation is simply the example scaled by the label $\epsilon_i = - 
\rho t_i \frac{x_i}{\|x_i\|}$. Evaluating the gradient at this perturbed weight, SAM's update reduces to a \textit{constant adjustment} of the logit scale by the norm of the datapoint.
\begin{align}
    -\nabla_{w + \epsilon_i} \ell(w + \epsilon_i, x_i, y_i) = t_i \sigma(-t_i <w, x_i> + \underbrace{\rho \|x_i\|_2}_{\substack{\text{constant adjustment}}}) x_i
\end{align}
Since the sigmoid function $\sigma$ increases monotonically, the extra positive constant increases the gradient norm of \textit{all} training points. However, among points of the same norm $\|x_i\|$, this adjustment causes low loss points where the confidence towards the incorrect class $\sigma(-t_i \langle w, x_i\rangle )$ is small to be up-weighted \textit{by a larger coefficient} than high loss points where the incorrect class confidence is high, as proven in the following lemma.
\begin{lemma}[Preferential up-weighting of low loss points]
\label{prop:linear}
Consider the following function.
{
\begin{align}
    f(z) = \frac{\sigma(-z + C)}{\sigma(- z)} = \frac{1 + \exp(z)}{1 + \exp(z - C)} 
\end{align}}
This function is strictly increasing if $C > 0$.
\end{lemma}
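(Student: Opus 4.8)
The plan is to prove $f$ is strictly increasing by exhibiting a strictly positive derivative on all of $\R$. The cleanest route is a substitution that also makes transparent where the hypothesis $C>0$ is used. Set $a = \exp(z)$, a strictly increasing bijection $\R \to (0,\infty)$, and $b = \exp(-C)$, which lies in $(0,1)$ exactly because $C > 0$. From the second expression for $f$ we then get $f(z) = g(a)$ with $g(a) = \frac{1+a}{1+ab}$, so it is enough to show that $g$ is strictly increasing on $(0,\infty)$.

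The next step is a one-line quotient-rule computation: the numerator of $g'(a)$ is $(1+ab) - (1+a)b = 1-b$, a positive constant since $b<1$, and the denominator is $(1+ab)^2 > 0$. Hence $g'(a) = \frac{1-b}{(1+ab)^2} > 0$ for every $a>0$, so $g$ is strictly increasing; composing with the strictly increasing map $z \mapsto \exp(z)$ shows $f$ is strictly increasing on $\R$. Equivalently, one can skip the substitution and differentiate $f$ directly in $z$: the quotient rule gives numerator $e^z(1+e^{z-C}) - (1+e^z)e^{z-C} = e^z(1-e^{-C})$, positive iff $C>0$, over the positive denominator $(1+e^{z-C})^2$.

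There is no genuine obstacle here — the argument is a single derivative — so the only things worth care are bookkeeping: confirming that $z \mapsto e^z$ is an order-isomorphism so that monotonicity transfers, and noting that $C>0$ is exactly what forces $1-b>0$; for $C<0$ the same computation shows $f$ is strictly \emph{decreasing}, and for $C=0$ it is constant. As a sanity check one can also read off $\lim_{z\to-\infty} f(z)=1$ and $\lim_{z\to+\infty} f(z)=e^{C}$, consistent with an increasing function valued in $(1, e^C)$; this is the quantitative form of the claim that the constant adjustment $\rho\|x_i\|$ up-weights every sample but favors the low-loss (large-$z$) ones.
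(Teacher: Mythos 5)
Your proof is correct and is essentially the same argument as the paper's: a single quotient-rule computation showing the derivative's numerator reduces to $e^z(1-e^{-C})>0$, which is exactly the paper's calculation (your substitution $a=e^z$, $b=e^{-C}$ is just a cosmetic repackaging of it). The added remarks on the limits $1$ and $e^C$ and on the $C\le 0$ cases are nice sanity checks but not a different route.
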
 
\begin{proof} The first derivative is non-negative.
{ \begin{align}
    \frac{df}{dz} &= \frac{\exp(z)}{1 + \exp(z - C)} - \frac{(1 + \exp(z)) \exp(z - C)}{(1 + \exp(z - C))^{2}} \\ 
    &= \frac{\exp(z)(1 - \exp(-C)}{(1 + \exp(z - C))^2} > 0 \quad \forall z \in \R
\end{align}}
\end{proof}

We can interpret 1-SAM as a gradient reweighting scheme, where the weight for example $x_i$ is set to
\begin{align} 
    \frac{\|\nabla_{w + \epsilon_i} \ell(w + \epsilon_i, x_i, t_i)\|}{\|\nabla_{w} \ell(w, x_i, t_i)\|} = \frac{\sigma(-t_i \langle w, x_i\rangle + \rho \|x_i\|)}{\sigma(-t_i \langle w, x_i\rangle )}.
\end{align} 
We  directly apply Lemma \ref{prop:linear} by setting $z = t_i \langle w, x_i \rangle$ and $C = \rho \| x_i\|_2$ to prove that across points of the same norm, low loss points are more affected by the up-weighting than high loss points. Since the loss decreases faster for clean points over noisy points, SAM preferentially up-weights the gradients of clean points. 

The fact that the resulting early stopping test accuracy of SAM is higher than gradient descent follows by induction. For our toy data distribution, we note that test accuracy saturates as $\rho$ goes to infinity. In particular, 1-SAM converges to the Bayes optimal solution. Furthermore, consistent with previous literature, we find that even in this linear setting, 1-SAM behaves very differently from the naive SAM update (Section \ref{sec:preliminaries:sub:SAM}); the latter does not achieve any performance gains from SGD. We further discuss the asymptotic behavior of 1-SAM and the behavior of n-SAM in Appendix \ref{app:linear}.

\section{Neural Networks: SAM's robustness comes from the Jacobian}
From our linear analysis, we learned that SAM's logit scale term preferentially up-weights the gradients of low-loss points. We verify that this effect of SAM is also visible in neural networks. While the effect persists, we will show that, on the contrary, 1-SAM's logit scale component is not the main reason for SAM's improved robustness in deep networks. In particular, just applying the 1-SAM perturbation to the Jacobian term (J-SAM) recovers most of 1-SAM's gains while logit reweighting alone (L-SAM) cannot. Finally, we find that a large proportion of the performance gains can be recovered through a simpler regularization that mimics the effect of the 1-SAM Jacobian on just the last layer weights and activations. 

\subsection{Explicit reweighting does 
 not fully explain SAM's gains}
\begin{figure}[t]
    \centering
    \includegraphics[width=\textwidth]{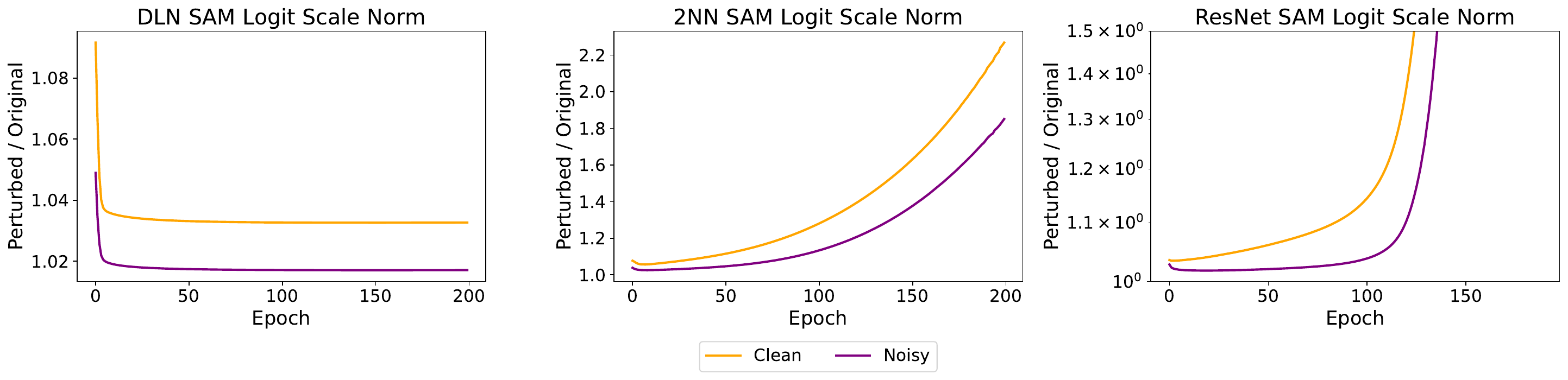}
    \includegraphics[width=\textwidth]{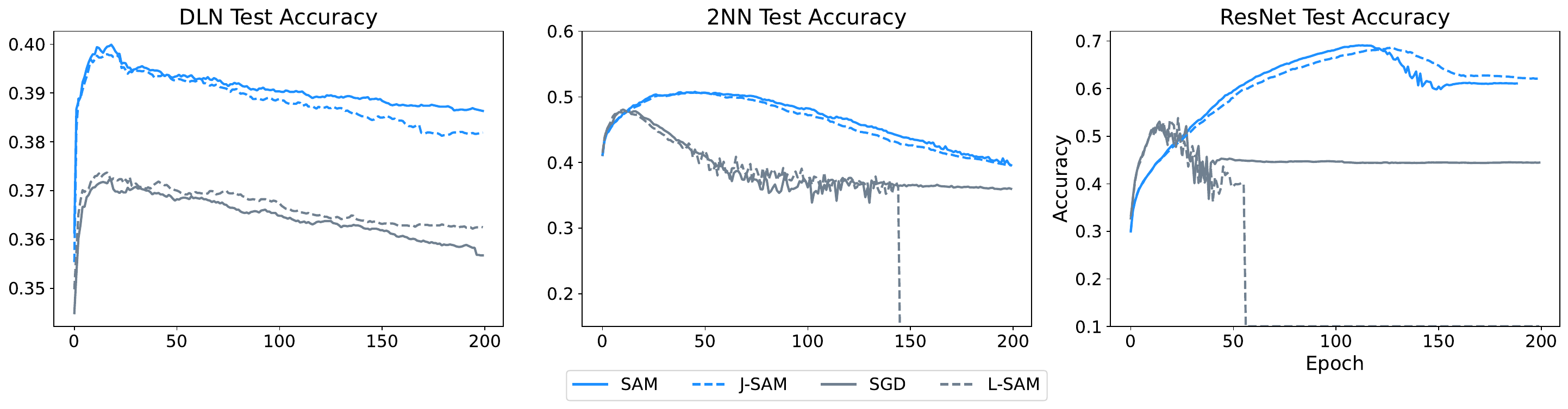}
\caption{In 2-layer deep linear networks (DLN), 2-layer MLP with ReLU activation (2NN), and ResNet18 trained on noisy CIFAR10, we observe that SAM's perturbation to the logit scale preferentially upweights the gradient norm for clean examples (top row). Yet, J-SAM i.e. SAM absent the explicit reweighting effect, preserves SAM's label noise robustness (bottom row). }
\label{fig:intervention_resnet}
\end{figure}

We first note that the upweighting of low-loss points can be observed even in multi-class classification with neural networks. Recall the general form of the gradient for multi-class cross-entropy loss is $\nabla_w \ell(x, t) = \langle \sigma(f(w, x)) - e_t, \nabla_w f(w, x)\rangle$ where $\sigma$ is the softmax function and $e_t$ is the one-hot encoding of the label $t$. The analogous ``logit scale'' component for multiclass classification is the $K$ dimensional vector $g(w, x, t) = \sigma(f(w, x)) - e_t$ whose $\ell_2$ norm scales with the loss. In Figure \ref{fig:intervention_resnet}, we measure the change in the norm of this quantity for each example $x_i$ in the minibatch after the SAM perturbation, i.e., $\|g(w + \epsilon_i, x_i, t_i)\| / \|g(w, x_i, t_i)\|$. In ResNet18 trained on CIFAR10 using SAM, we indeed observe that this quantity is higher in clean examples than noisy examples (Figure \ref{fig:intervention_resnet}). 

This result alone seemingly implies that the explicit reweighting also explains SAM's robustness in deeper models. However, when we isolate this effect by applying SAM to just the logit scale term, i.e., L-SAM, with the optimal perturbation magnitude found for SAM $\rho=0.01$, we observe marginal early-stopping performance gains above SGD (Figure \ref{fig:intervention_resnet}). Alternatively, J-SAM recovers almost all of the gains of SAM. This suggests that SAM's reweighting of the logit scale is not the main contributor to SAM's robustness \emph{in neural networks}. We also find that this observation does not require an arbitrarily deep network and also holds true in simple 2-layer deep linear networks (DLN) and ReLU MLP's (2NN) trained on flattened CIFAR10 images with 30$\%$ label noise.  A similar analysis of SAM under label noise in linear models was conducted by \citet{andriushchenko2022understanding}, however they attribute the label noise robustness in neural networks to logit scaling. We claim the opposite: that the \emph{direction} or the network Jacobian of SAM's update becomes much more important. 

\subsection{Analysis}
Motivated by this, we study the effect of perturbing the Jacobian in a simple 2-layer DLN.  In the linear case, the Jacobian term was constant, but in the nonlinear case the Jacobian is also sensitive to perturbation. In particular, for 2-layer DLNs, J-SAM regularizes the norm of the intermediate activations and last layer weights, as proven by the following proposition.

\begin{proposition} For binary classification in a 2-layer deep linear network $f(v, W, x) = \langle v, W x\rangle$, J-SAM approximately reduces to SGD with L2 norm penalty on the intermediate activations and last layer weights.
\label{prop:implicit_reg}
\end{proposition}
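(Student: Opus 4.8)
The plan is a direct expansion of the J-SAM update for the bilinear model $f(v,W,x)=\langle v, Wx\rangle$, separating it into the ordinary SGD update plus a correction that I will identify as $\ell_2$ regularization. Throughout write $w=(v,W)$, abbreviate the intermediate activation $z_i=Wx_i$, the logit scale $c_i=\sigma(-t_i f(w,x_i))$ (a positive scalar), and $N_i=\|\nabla_w f(w,x_i)\|$. Since $f$ is bilinear, $\nabla_v f(w,x_i)=z_i$ and $\nabla_W f(w,x_i)=v x_i^\top$, so $N_i^2=\|z_i\|^2+\|v\|^2\|x_i\|^2$ (Frobenius norm on the $W$-block).

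First I would simplify the perturbation $\epsilon_i=\rho\,\nabla_w\ell(x_i,t_i)/\|\nabla_w\ell(x_i,t_i)\|$. Because the logit scale is a positive scalar it cancels in the normalization: $\nabla_w\ell(x_i,t_i)=-t_i c_i\,\nabla_w f(w,x_i)$ and $\|\nabla_w\ell(x_i,t_i)\|=c_i N_i$, so $\epsilon_i=-\rho t_i\,\nabla_w f(w,x_i)/N_i$, which splits into $\epsilon_i^{v}=-\rho t_i z_i/N_i$ and $\epsilon_i^{W}=-\rho t_i v x_i^\top/N_i$. Next I would evaluate the Jacobian at the perturbed weights $(v+\epsilon_i^{v},\,W+\epsilon_i^{W})$, again using bilinearity: $\nabla_v f(w+\epsilon_i,x_i)=(W+\epsilon_i^{W})x_i=z_i-\rho t_i\|x_i\|^2 v/N_i$ and $\nabla_W f(w+\epsilon_i,x_i)=(v+\epsilon_i^{v})x_i^\top=(v-\rho t_i z_i/N_i)x_i^\top$. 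Substituting into $\Delta^{\text{J-SAM}}\ell(w,x_i,t_i)=t_i c_i\,\nabla_{w+\epsilon_i}f(w+\epsilon_i,x_i)$ and using $t_i^2=1$, the $v$-block equals $t_i c_i z_i-\rho c_i(\|x_i\|^2/N_i)\,v$ and the $W$-block equals $t_i c_i v x_i^\top-\rho c_i(1/N_i)\,z_i x_i^\top$.

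Finally I would read off the interpretation. The leading terms $t_i c_i z_i$ and $t_i c_i v x_i^\top$ are exactly $-\nabla_v\ell$ and $-\nabla_W\ell$, i.e.\ the plain SGD update. The $v$-correction is $-\lambda_i v$ with $\lambda_i=\rho c_i\|x_i\|^2/N_i\ge 0$: weight decay, i.e.\ an $\ell_2$ penalty on the last-layer weights. The $W$-correction is $-\tfrac{\rho c_i}{2N_i}\nabla_W\|z_i\|^2$, since $z_i x_i^\top=(Wx_i)x_i^\top=\tfrac12\nabla_W\|Wx_i\|^2$: an $\ell_2$ penalty on the intermediate activation of example $i$. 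Averaging over the minibatch, the J-SAM step is thus a gradient-descent step on $L(w)+\tfrac{\rho}{2n}\sum_i (c_i/N_i)\big(\|x_i\|^2\|v\|^2+\|z_i\|^2\big)$. I do not expect a genuine obstacle here — the computation is short — and I would flag only two points for care: (i) the cancellation of the logit scale in $\epsilon_i$, which is what makes the J-SAM perturbation a clean rescaling of $\nabla_w f$ and is special to the scalar binary logit scale; and (ii) the precise meaning of "approximately," namely that the correction terms equal $-\nabla$ of the stated penalty only if the scalar coefficients $c_i/N_i$ are treated as held fixed (a stop-gradient reading), and that the effective penalty strengths $\lambda_i$ are iterate- and loss-dependent (larger for higher-loss points through $c_i$), so this is a data-dependent regularizer rather than a fixed one. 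As a sanity check I would note $\|x_i\|^2\|v\|^2+\|z_i\|^2=N_i^2$, so the penalty also reads $\tfrac{\rho}{2n}\sum_i \sigma(-t_i f(w,x_i))\,\|\nabla_w f(w,x_i)\|$ under the same convention.
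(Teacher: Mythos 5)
Your proposal is correct and follows essentially the same route as the paper's proof: a direct expansion of the bilinear Jacobian at the perturbed weights, yielding the SGD term plus corrections $-\rho c\,\|x\|^2 v/N$ on the last layer and $-\rho c\, z x^\top/N$ on the first layer, identified as gradients of $\tfrac12\|v\|^2$ and $\tfrac12\|z\|^2$ respectively (your $N_i$ is the paper's normalization factor $J$). Your added remarks on the cancellation of the logit scale in the normalization and on the precise, stop-gradient sense of ``approximately'' are more explicit than the paper's treatment but do not change the argument.
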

\begin{proof}
    We write the form of the J-SAM update for the first layer $W$ of the deep linear network:
{ \begin{align}
\label{eq:W_update}
   - \nabla_{W + \epsilon^{(1)}} \ell (w + \epsilon, x,t)  &= \sigma(-t f(w, x)) \left(t v - \frac{\rho}{J} z\right) x^\top \\ 
   &= - \nabla_{W} \ell (w, x,t) - \frac{\rho \sigma(-tf(w,x))}{J} z x^\top
\end{align}}
where $z = Wx$ is the intermediate activation and $J = \|\nabla f(x)\| = \sqrt{\|z\|^2 + \|x\|^2 \|v\|^2}$ is a normalization factor. 
In the second layer, the gradient with respect to $v$ is
{ \begin{align}
\label{eq:v_update}
   - \nabla_{v + \epsilon^{(2)}} \ell(w + \epsilon, x,t) 
    &=\sigma(-t f(w, x)) \left(t{z} - \frac{\rho \|x\|^2}{J} v\right) \\ 
    &= - \nabla_{v} \ell(w, x,t) - \frac{\rho \sigma(-t f(w, x)) \|x\|^2}{J} v
\end{align}}

From Equation \ref{eq:W_update}, note that SAM adds an activation norm regularization to the first layer $zx^\top = \nabla_W \frac{1}{2} \norm{z}{2}^2$ scaled by some scalar dependent on $\rho$, $f(w,x)$, and $J$. Similarly, from Equation \ref{eq:v_update}, note that SAM adds a weight norm penalty to the second layer weights $v = \nabla_v \frac{1}{2} \norm{v}{2}^2$ also multiplied by some scalar. The normalization factor $J$ scales the regularization be closer to the norm than the squared norm. 
\end{proof}

\begin{figure}[t]
\centering
\includegraphics[width=0.45\textwidth]{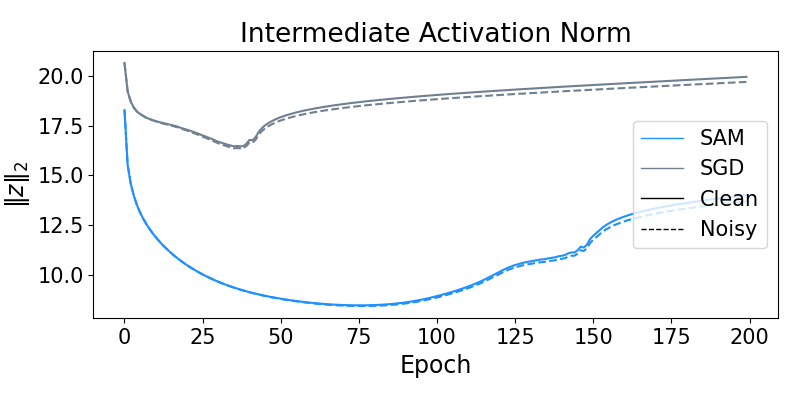}
\includegraphics[width=0.45\textwidth]{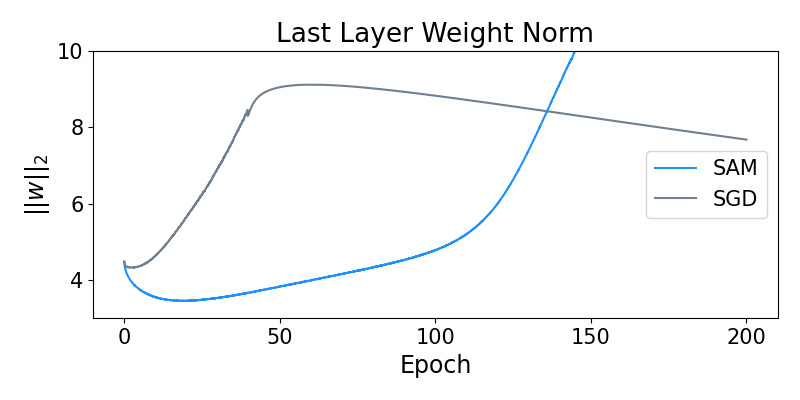}
\caption{When training ResNet18 with SAM, the norm of the final intermediate activations and last layer weights decreases significantly, consistent with 2-layer DLN analysis.}
\label{fig:norms}
\end{figure}

Empirically, we see that a similar effect is present in SAM for deeper networks. As networks become deeper, the exact J-SAM update becomes more complicated than simply SGD with weight and feature norm regularization. However, when we train ResNet18 with SAM and measure the norm of the final intermediate activations and last layer weights, the norms still decrease significantly in this setting~(Figure \ref{fig:norms}). 

\subsection{Experiments}
Inspired by our analysis, we train deep networks using SGD with $\ell_2$ regularization on the last layer weights and last hidden layer intermediate activations. Namely, we simplify down SAM's regularization to the following objective
\begin{align}
    \min_w L(w) + \gamma_z \frac{1}{N} \sum_{i=1}^N  \|z\|_2 + \gamma_v \|v\|_2^2
\label{eq:new}
\end{align}
where $v$ is the last layer weights and $z$ is the last hidden layer intermediate activation. 

We conduct our experiments on CIFAR10 with ResNet18. 1-SAM leads to unstable optimization with batch normalization as it requires passing through the datapoints individually through the network. Thus, we replace all batch normalization layers with layer normalization. Keeping all other hyperparameters such as learning rate, weight decay, and batch size the same, we compare the performance of SGD, 1-SAM, L-SAM, J-SAM, and the regularized SGD (Eq. \ref{eq:new}). Although regularized SGD does not achieve exactly the same test accuracy as SAM, the gap is significantly closed from 17$\%$ to 9$\%$. On the other hand, under no label noise, regularized SGD has a neglible improvement of 1$\%$ with hyperparameter search, while SAM still achieves $8\%$ higher test accuracy than SGD (Figure \ref{app:fig:no_label_noise}). Thus, while insufficient to explain all of SAM's generalization benefits, we suspect that a similar regularization of final layers in SAM is particularly important for generalization under heavy label noise. 

\begin{table}[h]
\centering
\begin{tabular}{lc}
\toprule
Algorithm & Best Test Accuracy \\
\midrule
1-SAM ($\rho = 0.01$) & 69.47\% \\
L-SAM ($\rho = 0.01$) &  54.13\% \\
J-SAM ($\rho = 0.01$) & 69.17\% \\
SGD & 52.48\% \\
SGD w/ Proposed Reg & 60.8\% \\
\bottomrule
\end{tabular}
\caption{Adding our proposed regularization on the last layer weights and logits boosts SGD performance by $8\%$. }
\end{table}

\subsection{Connection between Logit Scale and Jacobian Terms} 
We hypothesize that the benefits of regularizing the Jacobian could also partially be reasoned about in terms of gradient contribution or as an \emph{implicit} up-weighting scheme. By regularizing the norm of the weights and features, the magnitude of the network output remains small. Given $\norm{f(x)}{2} \leq C$, note that the loss of any single datapoint is lower and upper bounded by $\log\left(1 + (K-1)\exp(-C)\right)$ and $\log\left(1 + (K-1) \exp(C) \right)$, respectively. 
By keeping $C$ small, clean training loss may remain non-negligible even as the clean training accuracy increases, and the loss of noisy misfit examples is capped. Indeed, as can be observed in Figure~\ref{fig:train_stats} and~\ref{app:fig:up_weighting}, the loss of clean examples is much higher in SAM than SGD for the same gap between clean and noisy training accuracy. In linear models, weight decay and SAM's logit scale adjustment have equivalent effects (Appendix~\ref{app:linear}). A small weight norm and SAM's logit scale both balance the gradient contribution of examples. Furthermore, weight decay and small initialization is well-known to improve robustness in overparametrized settings~\citep{advani2017highdimensional}.

\section{Related work}
\label{sec:lit_review}
\subsection{Implicit Bias of SAM}
While the overall mechanics of SAM remains poorly understood, several papers have independently tried to elucidate why the \textit{per-example} regularization of 1-SAM may be important. \citet{andriushchenko2022towards} show that in sparse regression on diagonal linear networks, 1-SAM is more biased towards sparser weights than naive SAM. \citet{wen2022does} differentiate 1-SAM and naive SAM by proving that the exact notion of ``flatness'' that each algorithm regularizes is different. Our observation of feature regularization is of close connection to \cite{andriushchenko2023sharpness} which show that SAM drives down the norm of the intermediate activations in a 2 layer ReLU network and this implicitly biases the activations to be low rank. \citet{meng2023per} analyze the per-example gradient norm penalty, which also effectively minimizes sharpness and show that if the data has low signal-to-noise ratio, penalizing the per-example gradient norm dampens the noise allowing more signal learning. Recent works also make connections between naive SAM and generalization \citep{behdin2023statistical, chen2024does}.
Our analysis differ from previous works as we focus on understanding 1-SAM's robustness to random label noise, which requires reasoning about the \emph{best not final test accuracy}. Although this is the regime where SAM's achievements are the most notable, it has not been thoroughly studied in previous works which focus on SAM's solution at convergence. 

\subsection{Connection between Sharpness and Generalization}
Sharpness of the loss landscape has used in deep learning as a generalization measure  \citep{hochreiter1997flat, dziugaite2017computing, bartlett2017spectrally, neyshabur2017exploring}. A family of optimization choices including minibatch noise in stochastic gradient descent (SGD), large initial learning rate, and dropout have shown to implicitly regularize the model towards solutions lying in \textit{flat} basins or areas of low curvature \citep{keskar2016large, dziugaite2017computing, cohen2021gradient, damian2022self, nar2018step, wei2020implicit, damian2021label, orvieto2022anticorrelated, jastrzebski2021catastrophic}. However, the negative correlation between generalization and sharpness is not universally true, but specific to certain optimization choices \citep{jiang2019fantastic}. For example, strong data augmentation may improve generalization but sharpen the landscape \citep{andriushchenko2023modern}. Sharpness regularization also appears in adversarial weight robustness \citep{wu2020adversarial, zheng2021regularizing}. However, adversarial weight robustness has no direct connection to the label noise robustness we study in our work.

\subsection{Learning with label noise}  Mislabeled data is a persistent problem even in common benchmarks such as CIFAR and ImageNet \citep{muller2019identifying} and they can have significant impact on model performance \citep{nakkiran2021deep, rolnick2018deep, gunasekar2023textbooks, west2021symbolic}. Our conclusions, which reduce generalization to how many more clean examples are learned first, is tied to previous literature on the faster learning time of clean versus noisy examples in gradient based optimization. \citet{liu2020early} first prove this in linear models. More recently, \citet{liu2023emergence} showed that a similar effect is observable in neural networks, where the gradient over clean and noisy examples have negative cosine similarity at the beginning of training, and we may reason about early learning of clean points by the magnitude of their contribution to the average gradient. Many metrics in practice also use learning time to identify examples that are memorized~\citep{zhang2018generalized, lee2019robust, chen2019understanding, huang2019o2u, jiang1712mentornet, jiang2020beyond, carlini2019distribution, jiang2020characterizing, arazo2019unsupervised, liu2022robust}.

\section{Discussion, Limitations, and Conclusion}
Although the SAM optimizer has proven very successful in practice, there is a notable divide between the established motivation for SAM of finding a flat minimum, and the empirical gains achieved. Fundamentally, the work we present here aims to justify the usage of SAM by appealing to a very different set of principles than those used to originally derive the algorithm. Specifically, we show that in the linear and nonlinear cases, there is an extent to which SAM ``merely'' acts by learning more clean examples before fitting noisy examples.  This provides a natural perspective upon which to analyze the strong performance of SAM, especially in the setting of label noise.

In the linear setting, we identified that SAM explicitly up-weights the gradient signal from low loss points. This is quite similar to well known label noise robustness methods \citep{liu2022robust, liu2020early} which also utilize learning time as a proxy for distinguishing between clean and noisy examples. In the nonlinear setting, we identify arguably a more interesting phenomena -- \emph{how} clean examples are fit can affect the learning time of noisy examples.  We show that SAM regularizes the norm of the intermediate activations and final layer weights \emph{throughout training} and this improves label noise robustness. Ultimately, the effect may be similar to label noise robustness methods that regularize or clip the norm of the logits \citep{wei2023mitigating}. 

Finally, we emphasize that despite their close connection, SAM has been surprisingly underexplored in the label noise setting. The research community has developed a number of methods for understanding and adjusting to label noise, and it has so far been a mystery as to how SAM manages to unintentionally match the performance of such methods. Empirically however, we find that simulating even partial aspects of SAM's regularization of the network Jacobian can largely preserve SAM's performance. As a secondary effect of this research, we hope our conclusions can inspire label-noise robustness methods that may ultimately have similar benefits to SAM (but ideally, without the additional runtime cost incurred by sharding the gradients in 1-SAM).

\section{Acknowledgements and Disclosure of Funding}
We thank Hossein Mobahi, Behnam Neyshabur, Dara Bahri, Shankar Krishnan for their  insights and guidance which greatly shaped our understanding of SAM. We thank Jeremy Cohen, Tengyu Ma, Maksym Andriushchenko, and Tanya Marwah for the many helpful discussions about sharpness. We thank Jacob Springer, Pratyush Maini, and Alex Li for discussions about memorization, and assistance with building our codebase. We thank Anna Bair and Sanket Mehta for their insights about how SAM behaves in practice. This work was supported in part by Bosch Center for Artificial Intelligence and the AI2050 program at Schmidt Sciences (Grant \#G2264481). 

\bibliographystyle{iclr2024_conference}
\bibliography{reference}
\newpage
\appendix
\section{Methods}
\label{app:methods}
All code was implemented in JAX~\citep{jax2018github}, and we utilize the Flax neural network library. We utilize a normally distributed  random initialization scheme. Our experiments were run on NVIDIA Quadro RTX A6000.

\subsection{Synthetic Label Noise}
We add synthetic label noise in the following manner. We randomly select $\Delta$ proportion of the training data to corrupt. For each datapoint $(x,y)$, we select corrupted label $t$ randomly from $t \in \{i \in [k]_{k=1}^K \mid i \neq y\}$.

\subsection{Experiments on CIFAR10}
For experiments conducted on CIFAR10, we train with 30$\%$ label corruption. We hyperparameter tuned the model for the best learning rate using SGD, and then hyperparameter tune 1-SAM's hyperparameter $\rho$ utilizing the same learning rate. We do not utilize any learning rate schedule or data augmentation other than normalizing the image channels using mean (0.4914, 0.4822, 0.4465) and standard deviation (0.2023, 0.1994, 0.2010). 

\paragraph{ResNet18}
We have modified the ResNet18 architecture by replacing all Batch Norm layers with Layer Norm. This is necessary to safely run 1-SAM which requires a separate forward pass for each datapoint in the minibatch. 
\begin{center}
\begin{tabular}{ c | c }
\toprule
 Parameter & Value \\ 
 \midrule
 Batch size & 128 \\
 Learning rate & 0.01 \\  
 Weight decay & 0.0005 \\ 
 Epochs & 200 \\
 $\rho$ (for SAM) & 0.01 \\
 \bottomrule
\end{tabular}
\end{center}

\paragraph{2 Layer DLN/MLP with ReLU} We do not include bias at any layer. The width of the intermediate layer is set to 500. We use the same hyperparameters as the ResNet18 experiments.

\subsection{Experiments on Toy Data}
\paragraph{Linear}
We set the parameters of our toy data distribution to be the following
\begin{center}
\begin{tabular}{ c | c }
\toprule
 Parameter & Value \\ 
 \midrule
 $\Delta$ & 0.4 \\
 B & 2 \\
 $\gamma$ & 1 \\  
 d & 1000 \\
 Training samples &  500 \\ 
 Test samples & 1000 \\
 \bottomrule
\end{tabular}
\end{center}
There is no weight decay. Learning rate is set to $0.01$. 

\newpage
\section{Linear Model Analysis}
\label{app:linear}
\paragraph{Toy Data Setting} 
We train our linear models over the following toy Gaussian data distribution $\mathcal{D}: \R^d \times \{-1, 1\}$. 
\begin{equation}
\begin{split}
    \text{True Label: } y \sim \{-1, 1\} \text{ by flipping fair coin } \\
        \text{Input: } x \sim y \cdot \left[B \in \mathbb{R}, z \sim \mathcal{N}\left(0, \frac{\gamma^2}{d-1} I_{d-1 \times d-1}\right)\right] \\
    \text{Target: } t = y \cdot \varepsilon \text{ where }
    \varepsilon \sim \{-1 \ \ \text{w.p} \ \Delta, \ 1 \ \ \text{w.p} \ (1 - \Delta)\}
\end{split}
\end{equation}

The test data is generated from a related distribution where the target is noiseless $t = y$. 

\paragraph{Test Accuracy}
The expected accuracy over the test distribution can be written as 
{\small
\begin{align}
    \mathsf{Acc}(w) = \E_{x,y \sim \mathcal{D}_{test}}\left[ \indicator{y (w^\top x) > 0} \right] = P\left(y (w^\top x) > 0\right) \\
    = P_{z \sim \mathcal{N}(0, I)}\left(\frac{\gamma}{\sqrt{d-1}} w_{1+}^\top z > - w_1 B\right) = 1 - \Phi\left( - \frac{ 
 B \sqrt{d-1} w_1}{\gamma \|w_{1+}\|} \right) 
\end{align}
}
where $w_{1+} \in \R^{d-2}$ denotes the vector consisting of the entries in $w$ excluding the first. Therefore, the accuracy monotonically increases with $w_1/\|w_{1+}\|$ and optimal linear classifier in this setting is proportional to the first elementary vector:
\begin{align}
    w^* \propto e_1
\end{align}

\subsection{1-SAM Asymptotics}
We are interested in assessing the early-stopping test accuracy of naive SAM (n-SAM) versus 1-SAM. As we can observe in Figure \ref{fig:linear_perf}, the best test accuracy monotonically increases with $\rho$. We analyze 1-SAM in the limit as the perturbation magnitude converges to limit $\rho \rightarrow \infty$ and observe that it converges to the optimal classifier. In this regime, the 1-SAM update for each example simply becomes 

\begin{align}
    \nabla_{w + \varepsilon_i} \ell(w + \varepsilon_i, x_i, y_i) = \lim_{\rho \rightarrow \infty} - t_i \sigma\left(-t_i \langle w_i, x_i \rangle + \rho \norm{x_i}{2}\right) x_i = - t_i x_i 
\end{align}

and the ratio between the magnitude of any two points converges to 1, specifically for any two datapoints $x_i$ and $x_j$
\begin{align}
   \lim_{\rho \rightarrow \infty} \frac{\norm{\nabla_{w + \varepsilon_i} \ell(w + \varepsilon_i, x_i, y_i)}{}}{\norm{\nabla_{w + \varepsilon_j} \ell(w + \varepsilon_j, x_j, y_j)}{}} = \lim_{\rho \rightarrow \infty} \frac{\sigma\left(-t_i \langle w, x_i \rangle  + \rho \|x_i\|\ \right)) }{\sigma\left(-t_j \langle w, x_j \rangle  + \rho \|x_j\|\ \right)} \\
 = \lim_{\rho \rightarrow \infty} \frac{1 + \exp(t_i \langle w, x_i \rangle  - \rho \|x_i\|)}{1 + \exp(t_j \langle w, x_j \rangle  - \rho \|x_j\|)} = 1
\end{align}

As a result, each gradient update of 1-SAM is precisely equal the empirical mean of the data scaled by the label $\hat{u}_\mathcal{D} = X^\top t$. Say that we are given fixed training examples $[x_i]_{i=1}^n$ independently sampled from $\mathcal{D}$ where $\Delta$ is corrupted. Note that as $d$ or $n$ grows (and $\Delta < 0.5$), $\hat{u}_\mathcal{D}$ converges to the Bayes optimal classifier. Notably, 
\begin{align}
    \hat{u}_\mathcal{D} = \left[ (1 - 2 \Delta) B, \sum_{i=1}^n \frac{\gamma}{n \sqrt{d-1}} z_i \right] \xrightarrow{d,n  \rightarrow \infty} (1 - 2 \Delta) B e_1
\end{align} 
Note that the ridge regression also converges to the empirical mean scaled by the label in the limit
\begin{align}
    (XX^\top + \lambda I)^{-1} X^\top t \xrightarrow[]{\lambda \rightarrow \infty} X^\top t
\end{align}
This gives us reason to believe that SAM and weight decay have similar regulatory properties. 
\newpage

\newpage
\subsection{Analysis of n-SAM}
We also analyze the naive SAM (n-SAM) update
\begin{align}
\label{n-SAM Update}
    w = w - \eta \left(\frac{1}{n} \sum_{i=1}^n \nabla_{w + \epsilon} \ell \left(w + \epsilon, x_i, y_i\right) + \lambda w \right)
\end{align}
where $\epsilon = \rho \frac{\nabla_w L(w)}{\|\nabla_w L(w)\|_2}$ and $L(w) = \frac{1}{n} \sum_{i=1}^n \ell(w, x_i, y_i)$ and $\rho$ is a hyperparameter. The original and follow up works \cite{foret2020sharpness, andriushchenko2022towards} show that n-SAM does not have any advantage over gradient descent in  practice. We observe that even in the linear setting with toy Gaussian data, n-SAM observes little early stopping improvements unless $\rho$ is scaled up dramatically. In Figure 4 for example, we set $\rho = 100000$ for n-SAM. 

\begin{figure}[t]
    \centering
    \begin{subfigure}[t]
    {0.45\textwidth}
    \centering
    \includegraphics[width = 0.9\textwidth]
    {images/1SAM_test_acc.pdf}
    \end{subfigure}\hfill
    \begin{subfigure}[t]
    {0.45\textwidth}
    \centering
    \includegraphics[width = 0.9\textwidth]
    {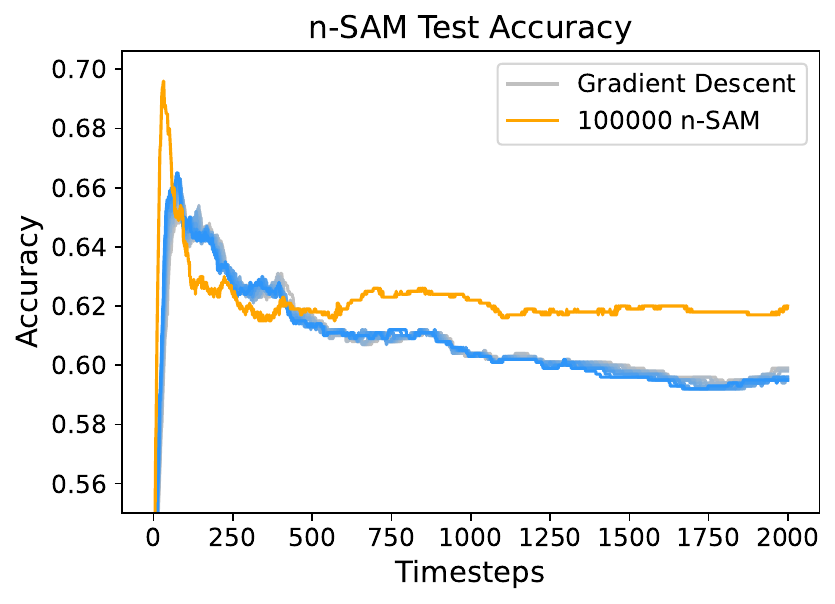}
    \end{subfigure}
    \begin{subfigure}[t]
    {0.45\textwidth}
    \centering
    \includegraphics[width = 0.9\textwidth]
    {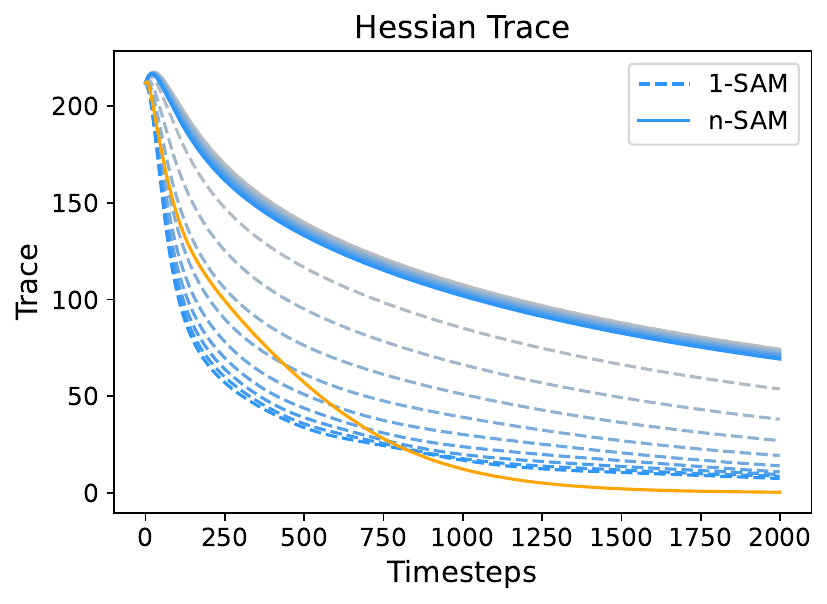}
    \end{subfigure}
    \hfill
    \begin{subfigure}[t]{0.45\textwidth}
    \centering
    \includegraphics[width = 0.9\textwidth]{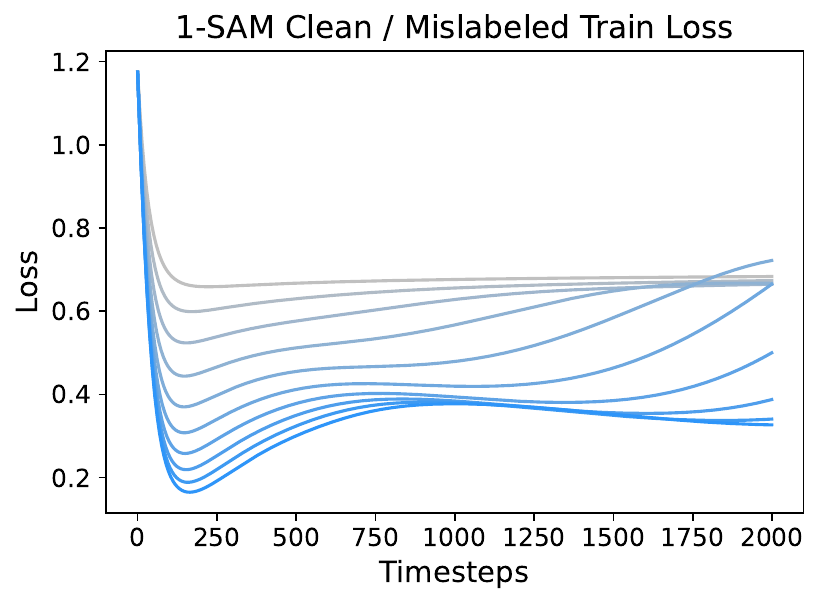}
    \end{subfigure}
    \caption{We train linear models on our toy model using 1-SAM. Bluer curves denote larger $\rho$. We randomly initialize the model, although we observe similar curves when initialized at the origin. }
    \label{app:fig:linear_perf}
\end{figure}

Comparing 1-SAM and n-SAM in the linear setting, we find that they have fundamentally different effects. n-SAM's perturbation is not a constant that is only a function of the data norm. The magnitude of the perturbation is proportional to the loss, so SAM does not preferentially upweight low loss points except for the early training steps. Let us consider that the noise $z_i$ of the datapoints are orthogonal for ease of analysis. Then

\begin{align}
    \nabla_w^{n-SAM} \ell(w, x_i, y_i) = 
    - t_i \sigma\left(-t_i \langle w + \rho \frac{\nabla_w L(w)}{\|\nabla_w L(w)\|}, x_i \rangle\right) x_i \\
    = - t_i \sigma\left(-t_i \langle w, x_i \rangle + \frac{\rho}{n \|\nabla_w L(w)\|} \left(C B^2 + \frac{\gamma^2  \|z_i\| \sigma(- t_i \langle w, x_i \rangle )}{d-1} \right)\right) x_i
\end{align}

for some scalar C that remains constant across the examples. Assuming gradient descent starting at $w = 0$, class balance, and the same number of mislabeled datapoints in each class,  n-SAM at each iteration, perturbs each point proportional to $\sigma(-t \langle w, x_i \rangle )$ which is smaller for low loss points and higher for high loss points. This does not guarantee preferential up-weighting. We do observe in Figure 3 (see orange curve) that if $\rho$ is sufficiently large, we are able to see some gains with n-SAM in the first couple timesteps. Finally, we do not see any correlation between test accuracy and flatness measured by the Hessian trace. In fact, n-SAM with large $\rho$ achieves smaller Hessian trace by the end of training but lower test accuracy. 
\newpage
\section{Implicit Up-Weighting}
\label{app:sec:up_weighting}
Previously, we showed that the gap between the training accuracy of clean and noisy examples correlates highly with the test accuracy (See Figure \ref{app:fig:up_weighting}). This led us to reason about improved generalization with learning clean examples faster than noisy examples. In particular, we use the ``sample-wise update norm'' as a proxy for how much each clean and noisy example contribute to the average update. For example, with SGD, the update is the gradient evaluated at the model parameters $w$, and for SAM, the update is the gradient evaluated at the perturbed $w + \epsilon$. We showed that in the linear setting, the logit scale term of SAM up-weights the update norm of clean examples. Next, for non-linear setting, we showed that SAM's Jacobian term regularizes the magnitude of the function output, and we suspect that this has a similar implicit effect of balancing the gradients of clean and noisy examples by keeping the loss of any example within a narrow range. 

Below, we plot the average clean versus noisy ratio of the loss, sample-wise update norm, and logit scale norm from Section 4.1. We desire these quantities to be high even as the clean-noisy accuracy gap increases. For the same clean-noisy accuracy gap, SAM and J-SAM's ratio of clean/noisy loss is higher (across the training trajectory, before the accuracy gap peaks). As a result, the clean/noisy update norm ratio is also higher with SAM and J-SAM, meaning clean examples have higher influence on the direction of the update
. Notably, this up-weighting effect is implicit, not due to explicit upweighting. 

\begin{figure}[h]
    \centering
    \includegraphics[width=0.45\textwidth]{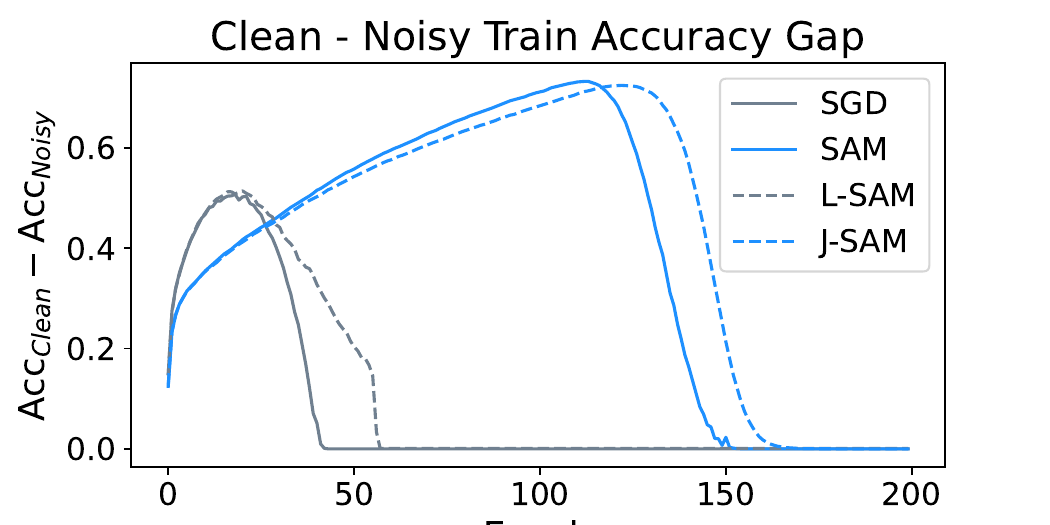} \\
    \includegraphics[width=0.31\textwidth]{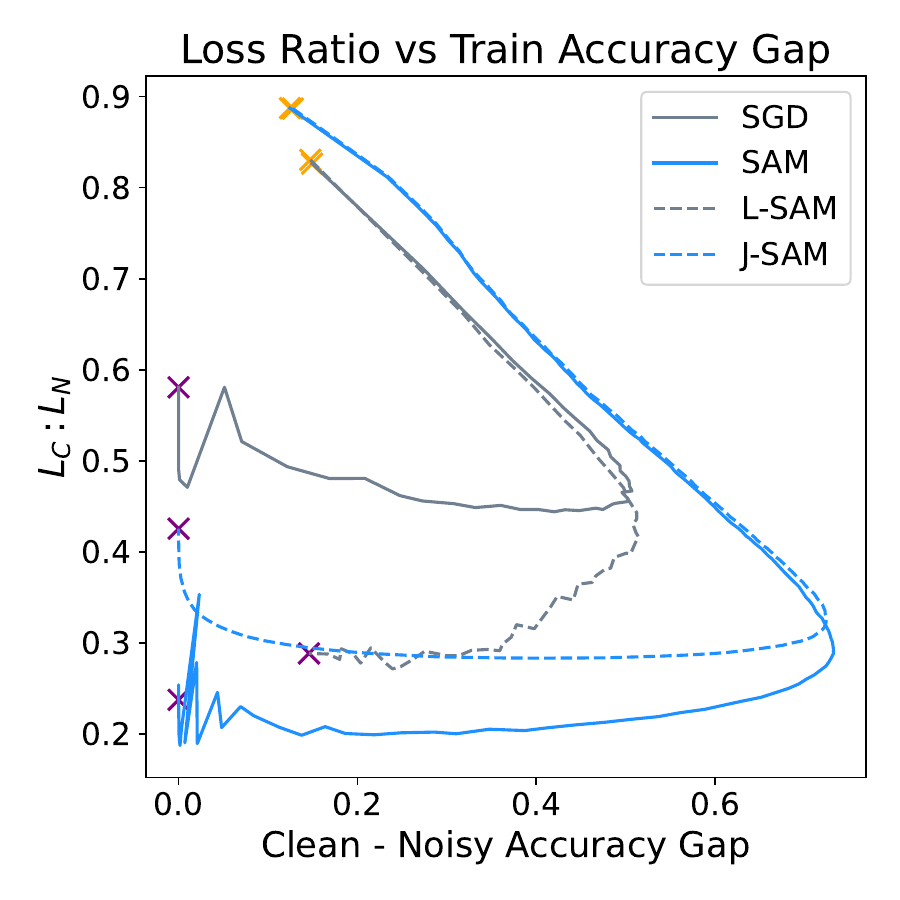}
    \includegraphics[width=0.31\textwidth]{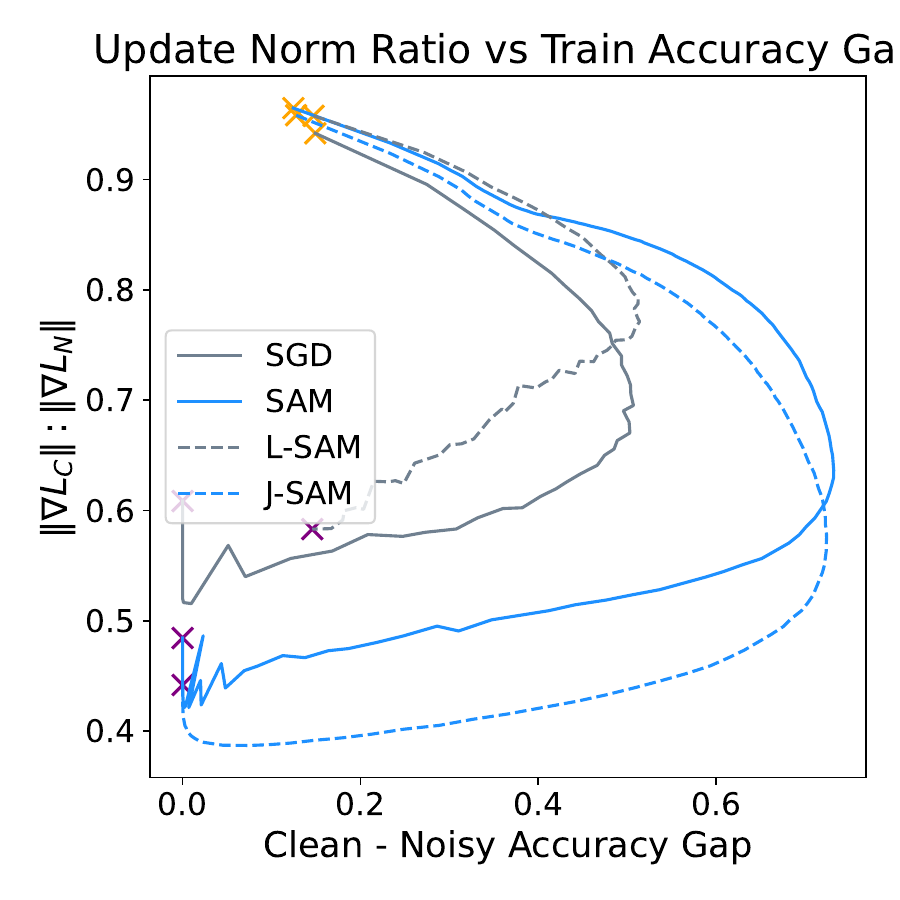}
    \includegraphics[width=0.31\textwidth]{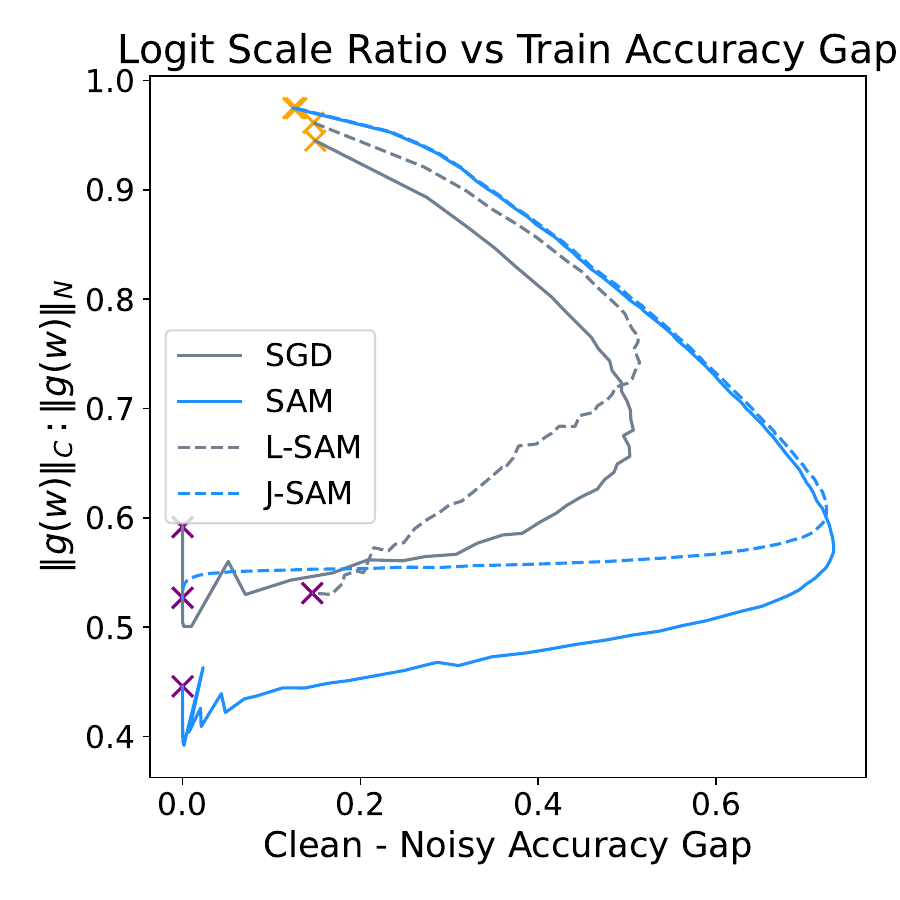}
    \caption{We plot the clean/noisy ratio of the loss, sample-wise gradient norm, and logit scale norm versus the clean-noisy accuracy gap. The ratios are higher with SAM and J-SAM than SGD and L-SAM for the same accuracy gap. Orange marker denotes the beginning of the training trajectory and purple denotes the end.}
    \label{app:fig:up_weighting}
\end{figure}

\newpage
\section{Evidence of Label Noise Robustness on Other Datasets}
\begin{figure}[h]
\centering
\includegraphics[width=\textwidth]{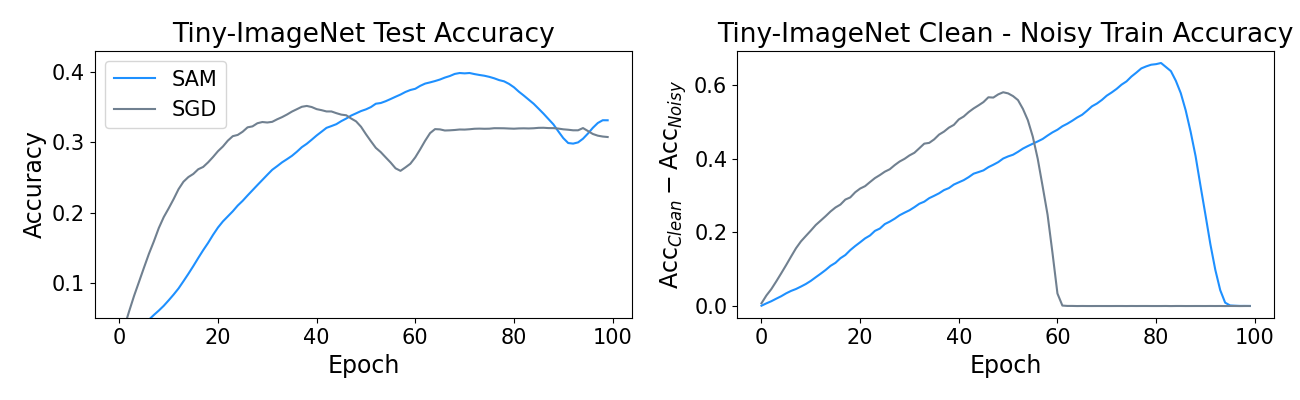}
\includegraphics[width=\textwidth]{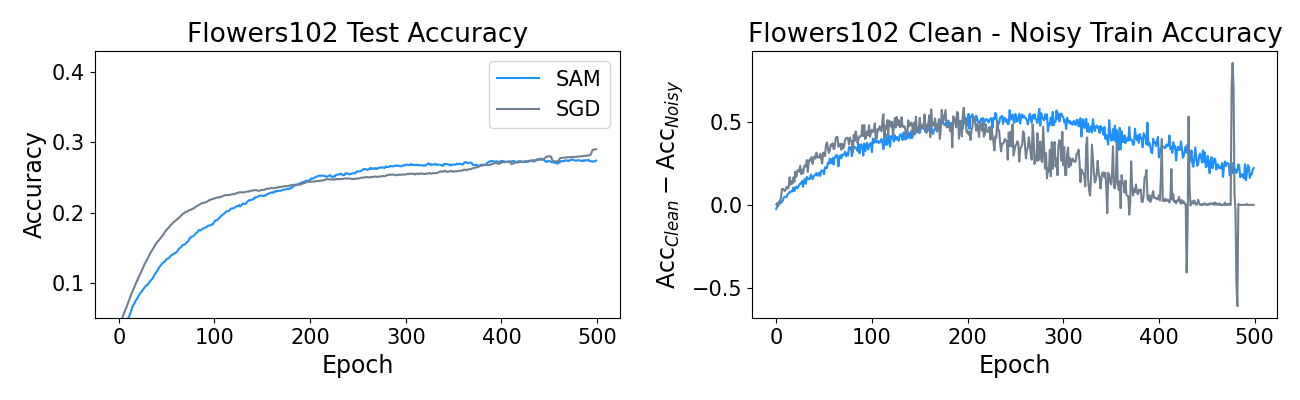}
\caption{SAM observes higher true accuracy of noisy examples and this corresponds with better test accuracy on Tiny-ImageNet with $30\%$ label noise and Flower102 with $20\%$ label noise. Do note that contrary to trends in CIFAR10, in a low data settings such as Flowers102, the test accuracies of SAM and SGD do not drop even when the model starts to overfit. Models are ResNet18. }
\end{figure}

\section{Regularized SGD Under No Label Noise}
\label{app:sec:no_label_noise}
\begin{figure}[h]
\centering
    \includegraphics[scale=0.5]{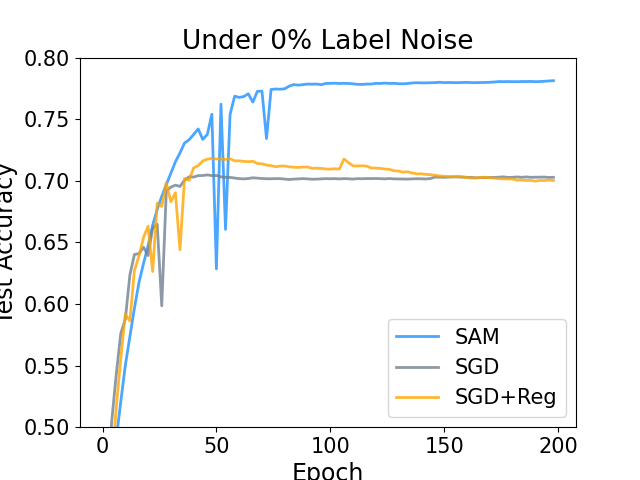}
    \caption{SAM ($\rho = 0.01$) and SGD have a smaller $8\%$ difference in performance (less in comparison to the $20\%$ difference with $30\%$ label noise). Our weight and feature norm penalty observes improvements but only by a small factor of $1\%$ and the performance degrades over time.}
    \label{app:fig:no_label_noise}
\end{figure}
\newpage

\section{Ablation Studies}
\label{app:ablation}
We observe that the improved benefits of SAM in the label noise setting does not dimish with underparametrization. Furthermore, SGD with large learning rate cannot match SAM's performance. 

\begin{figure}[h]
\centering
    \includegraphics[width=0.32\textwidth]{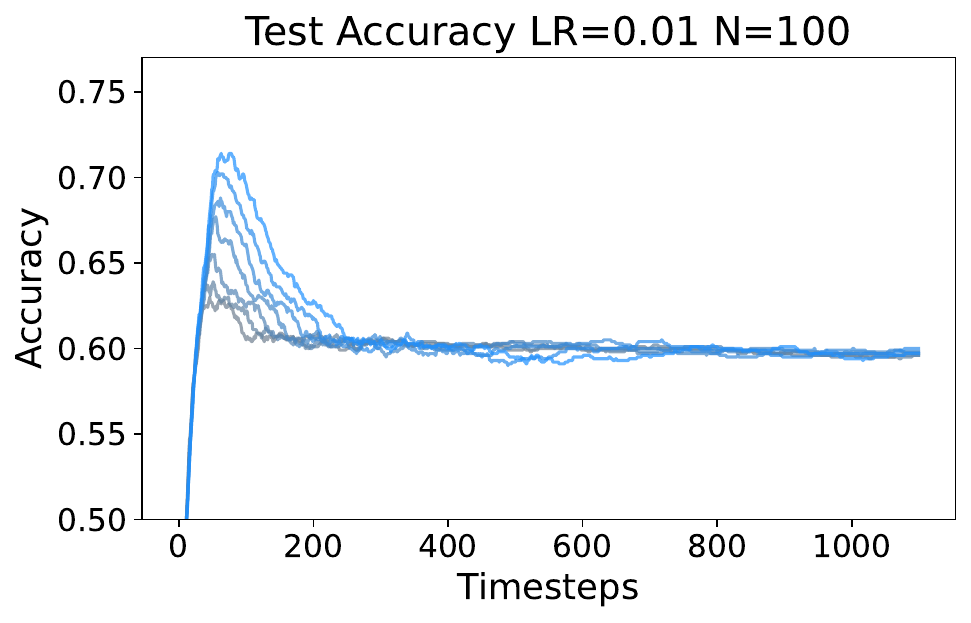} \hfill
    \includegraphics[width=0.32\textwidth]{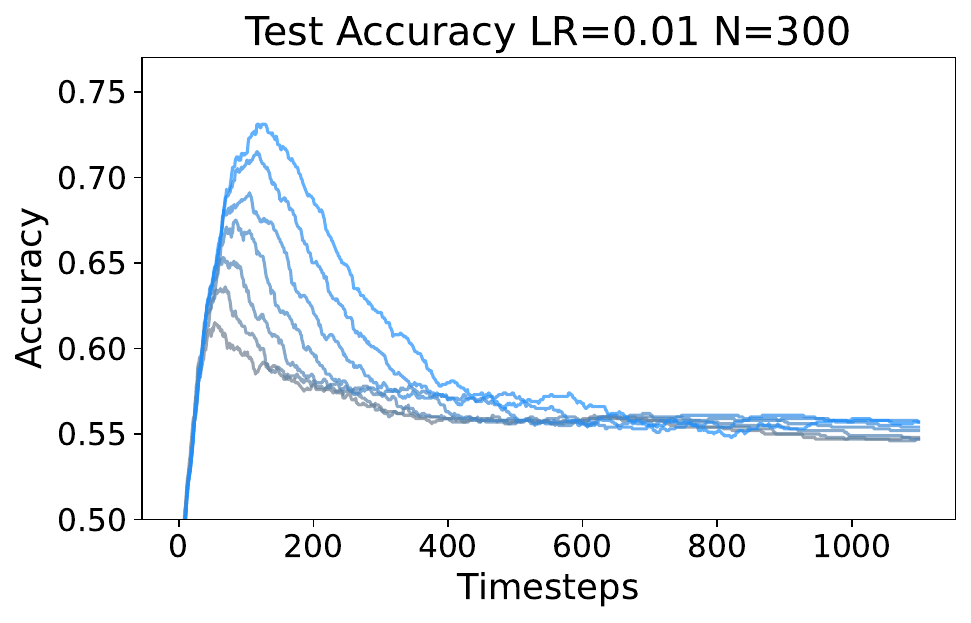} \hfill
    \includegraphics[width=0.32\textwidth]{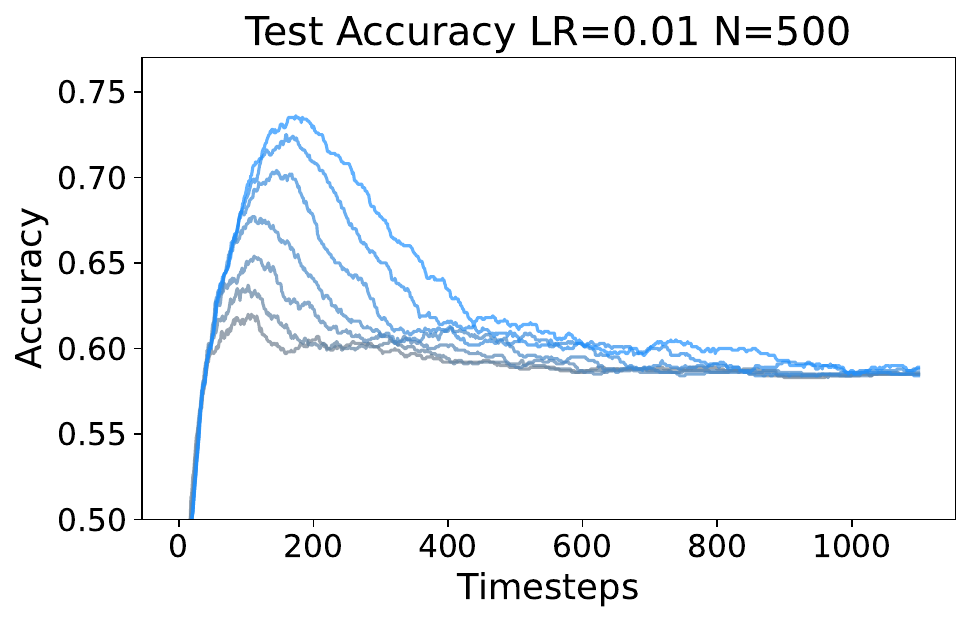} \\ 
    \includegraphics[width=0.32\textwidth]{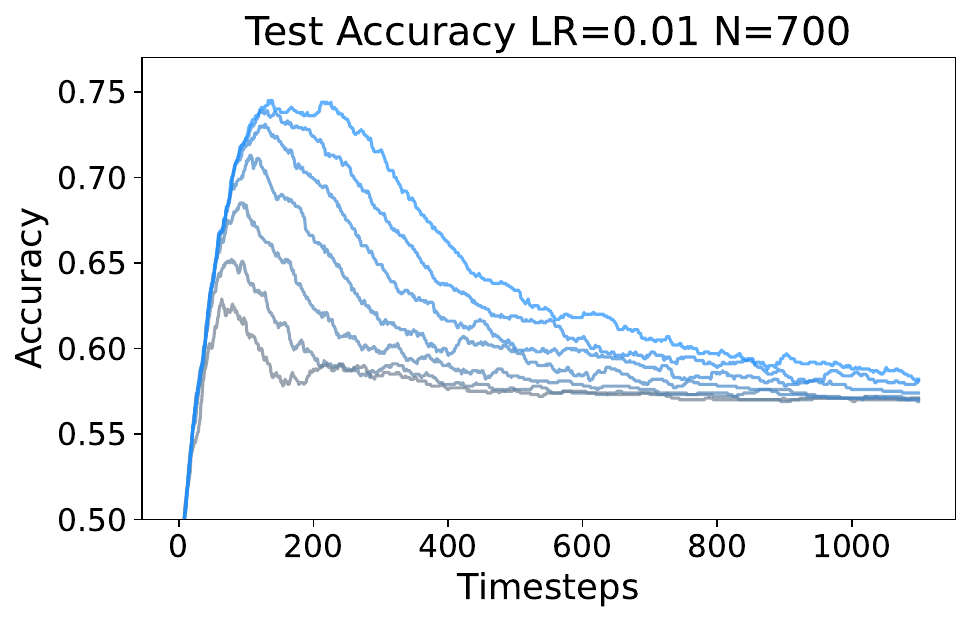} \hfill
    \includegraphics[width=0.32\textwidth]{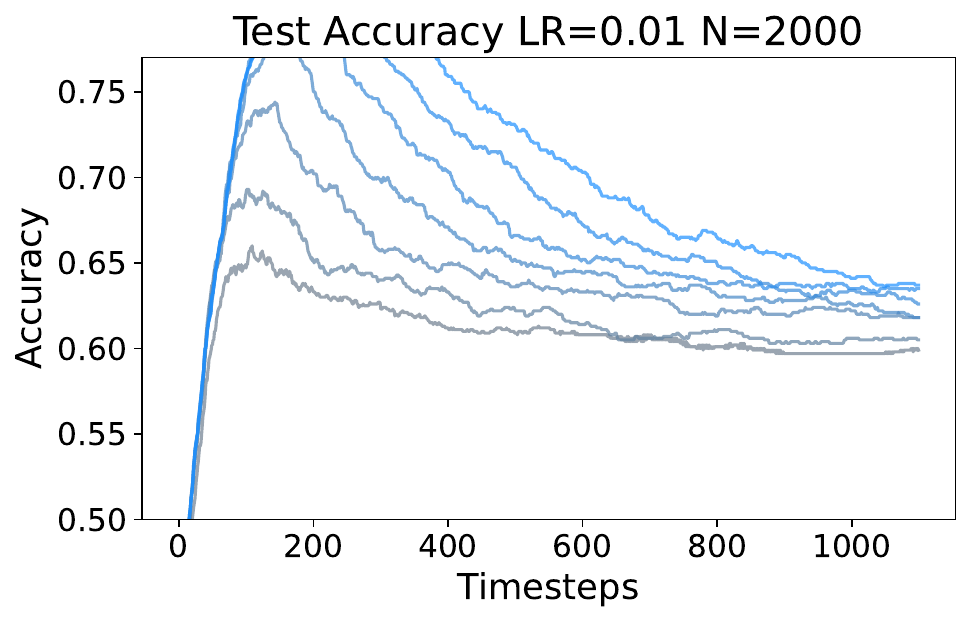} \hfill
        \includegraphics[width=0.32\textwidth]{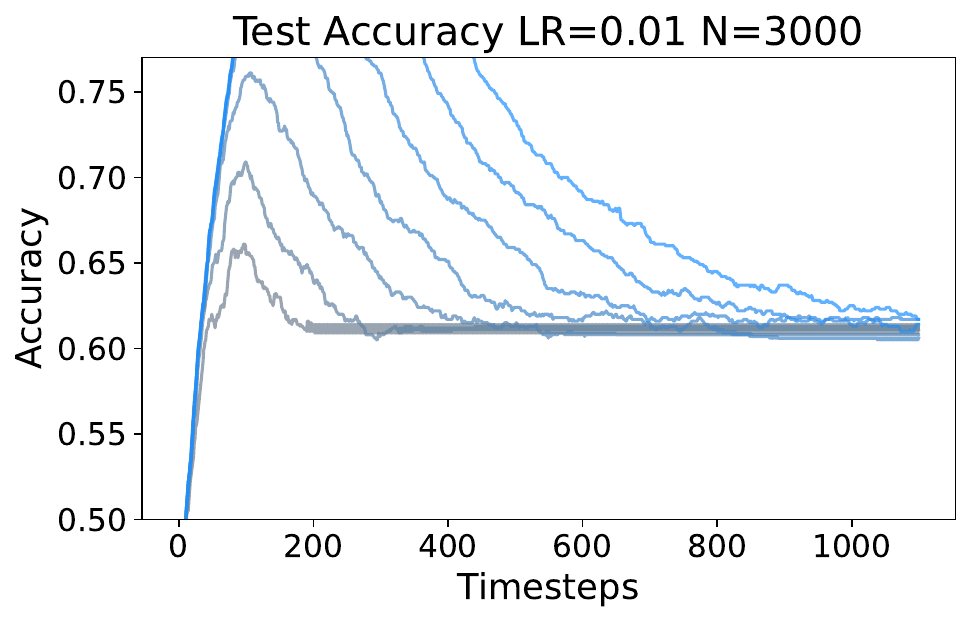}
    \caption{\textbf{Behavior with number of training examples for linear model/toy Gaussian data with 40$\%$ label noise.} Linear models trained with different subsets of the toy Gaussian data observe different levels of benefit with SAM. $\rho$ is scaled between 0.03 and 0.18, bluer curves signifying higher $\rho$. The data is $1000$ dimensions. Note that performance improves with SAM even in underparametrized regimes. }
\end{figure}

\begin{figure}[h]
\includegraphics[width=0.32\textwidth]{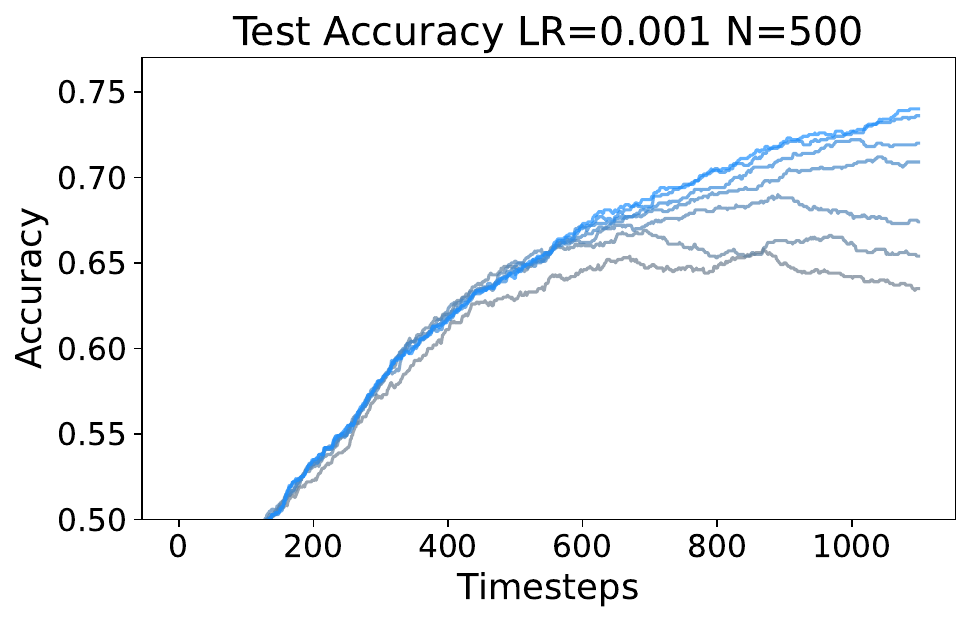}\hfill
\includegraphics[width=0.32\textwidth]{images/lr0.01_n500.pdf}\hfill
\includegraphics[width=0.32\textwidth]{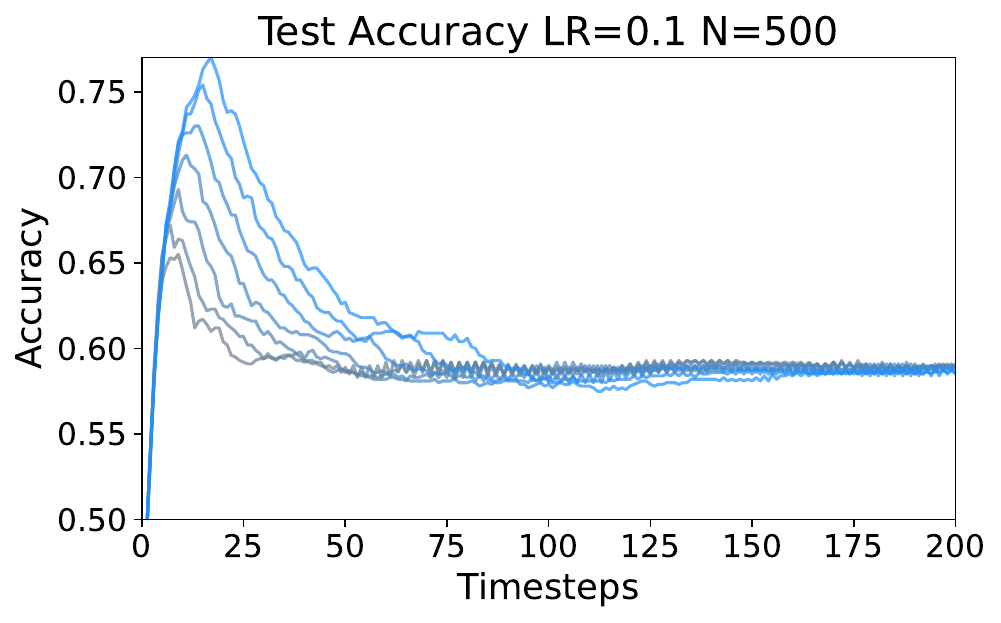}\\
\centering
\includegraphics[width=0.32\textwidth]{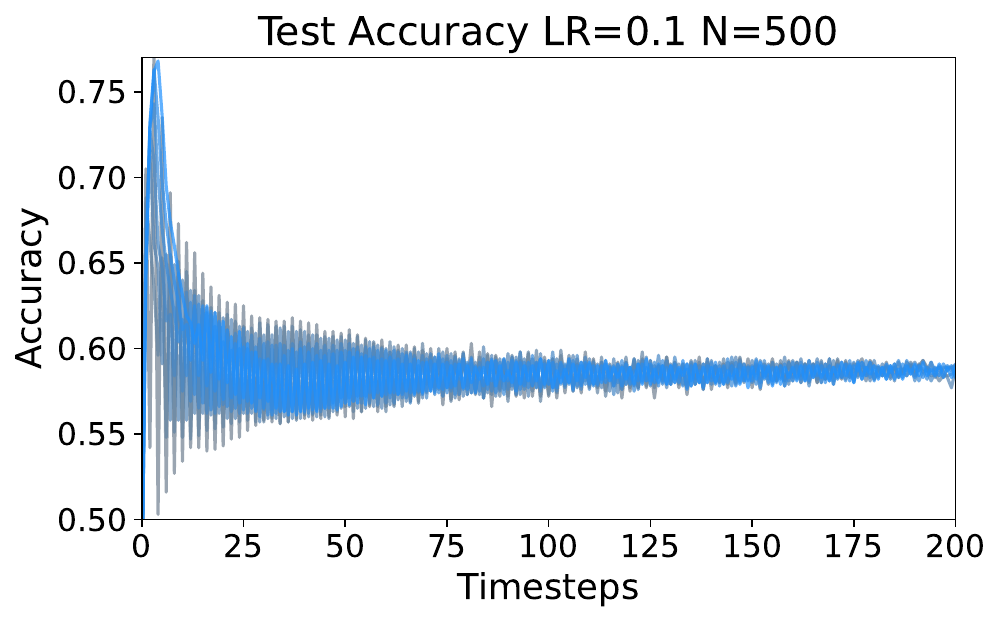}
\caption{\textbf{Behavior with learning rate for linear model/toy Gaussian data with 40$\%$ label noise} As the learning rate increases, we generally observe a slight improvement in early stopping accuracy in both SGD and SAM. }
\end{figure}

\begin{figure}[t]
\includegraphics[width=\textwidth]{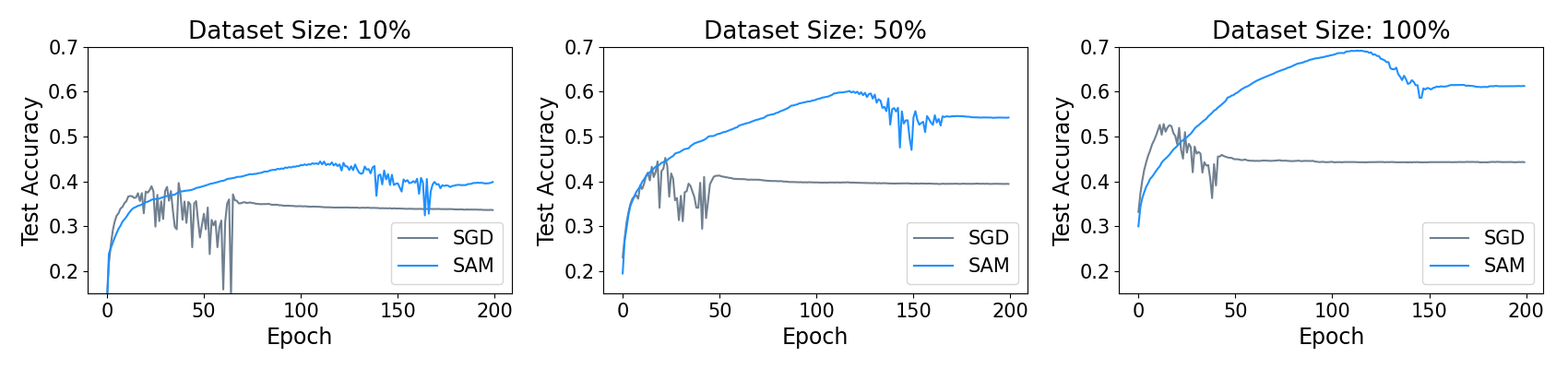}
\caption{\textbf{Behavior with number of training examples for ResNet18/CIFAR10 with 30$\%$ label noise} We compare SGD and SAM trained on different number of training examples ($10, 50, 100\%$ of the training data). We see that the difference between SAM ($\rho = 0.01$) and SGD increases as the dataset size increases.}
\end{figure}

\begin{figure}[h]
    \centering
    \includegraphics[width=\textwidth]{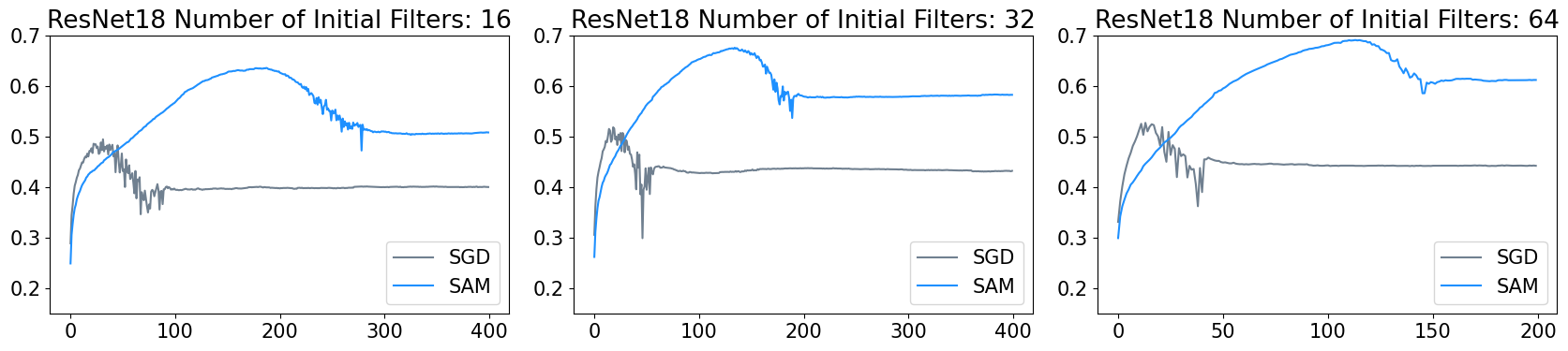}
    \caption{\textbf{Behavior with model width for ResNet18/CIFAR10 with 30$\%$ label noise} ResNet18 starts with 64 convolutional filters, and the filters double every two convolutional layers. We reduce the width of ResNet18 by 1/2 and 1/4 by starting with 32 and 16 initial number of filters, respectively.  We see that SAM ($\rho = 0.01$) and SGD both improve in terms of performance as model width increases.}
    \label{fig:enter-label}
\end{figure}

\begin{figure}[h]
\includegraphics[width=\textwidth]{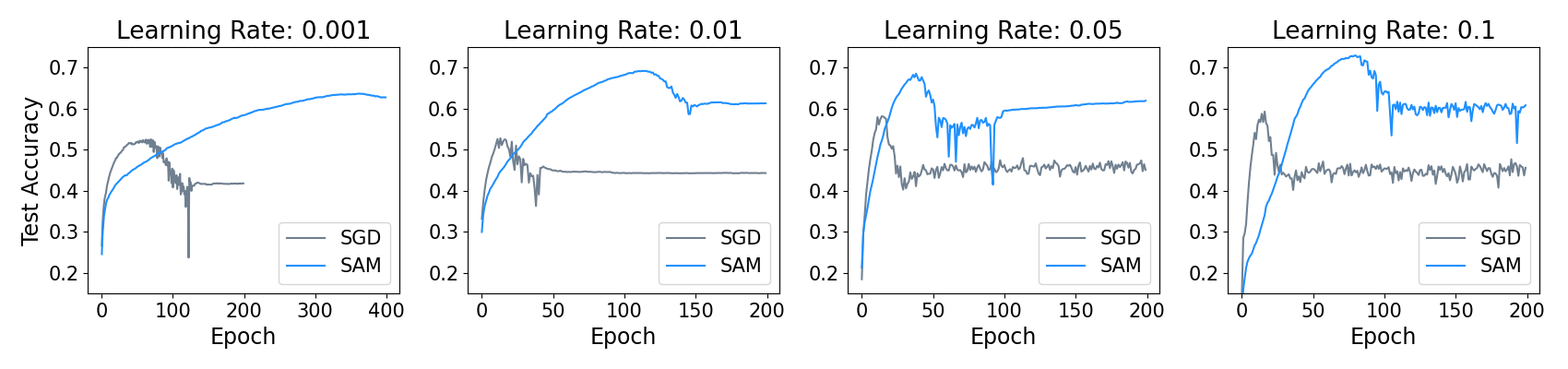}
\caption{\textbf{Behavior with learning rate for ResNet18/CIFAR10 with 30$\%$ label noise}. For each learning rate, we choose the best $\rho$ for SAM found by hyperparameter search. As learning rate increases, we observe that both SAM and SGD both improve in terms of performance as learning rate increases. For small learning rate $0.001$, we found it difficult for SAM to observe significant improvements upon SGD unless trained for at least double the time.}
\end{figure}

\end{document}